            \DeclareMathOperator*{\minimize}{minimize}
		\DeclareMathOperator{\subto}{subject\,to}
		\newcommand{\st}{s.t.}
		\DeclareMathAlphabet{\mathsfit}{T1}{\sfdefault}{\mddefault}{\sldefault}
\algnewcommand{\AlgAnd}{\textbf{and }}
\algnewcommand{\AlgInput}{\Statex \textbf{Input: }}
\algnewcommand{\AlgOutput}{\Statex \textbf{Output: }}
\tikzset{algpxIndentLine/.style={draw=none}}
\let\oldsubsection\subsection
\renewcommand{\subsection}[1]{
  \vspace{-8pt}
  \oldsubsection{#1}
  \vspace{-3pt}
}
\let\oldsubsubsection\subsubsection
\renewcommand{\subsubsection}[1]{
  \vspace{-8pt}
  \oldsubsubsection{#1}
}
\newcommand*{\tikzmk}[1]{\tikz[remember picture,overlay] \node (#1) {};\ignorespaces}
\newcommand{\boxita}[1]{%
    \tikz[remember picture,overlay]{%
        \node[yshift=2.5pt,xshift=0.2cm,draw=#1,thin,rounded corners,fit={(A)($(B)+(-0.05\linewidth,.2\baselineskip)$)}] {};%
    }\ignorespaces
}
\newcommand{\boxitcb}[1]{%
    \tikz[remember picture,overlay]{%
        \node[yshift=2.5pt,xshift=0.2cm,draw=#1,thin,rounded corners,fit={(A)($(B)+(-12pt,.2\baselineskip)$)}] {};%
    }\ignorespaces
}
\newcommand{\boxitb}[1]{%
    \tikz[remember picture,overlay]{%
        \node[yshift=-10pt,xshift=-1.55cm,draw=#1,thin,rounded corners,fit={(A)($(B)+(1\linewidth,.8\baselineskip)$)}] {};%
    }\ignorespaces
}
\newcommand{\boxitd}[1]{%
    \tikz[remember picture,overlay]{%
        \node[yshift=-10pt,xshift=-1.55cm,draw=#1,thin,rounded corners,fit={(A)($(B)+(.85\linewidth,.8\baselineskip)$)}] {};%
    }\ignorespaces
}
\colorlet{mypink}{red}
\colorlet{myblue}{cyan}
\newcommand\thefontsize[1]{{#1 The current font size is: \f@size pt\par}}
\Crefname{figure}{Fig.}{Figs.}
\Crefname{table}{Table}{Tables}
\Crefname{equation}{Eq.}{Eqs.}
\Crefname{section}{Sec.}{Secs.}
\Crefname{algorithm}{Alg.}{Algs.}
\Crefname{definition}{Def.}{Defs.}
\Crefname{appendix}{Appendix}{Appendices}
\begin{document}
\title{GCS*: Forward Heuristic Search on Implicit Graphs of Convex Sets}
%
%
\author{Shao Yuan Chew Chia \and
Rebecca H. Jiang \and
Bernhard Paus Graesdal \and \\
Leslie Pack Kaelbling \and
Russ Tedrake
}
\authorrunning{S.Y. Chew Chia et al.}
%
\institute{Massachusetts Institute of Technology, Cambridge, MA 02139, USA\\
\email{\{shaoyuan, rhjiang, graesdal, lpk, russt\}@mit.edu}}
\vspace{-15pt}
\maketitle              
\begin{abstract}
\vspace{-15pt}
We consider large-scale, implicit-search-based solutions to Shortest Path Problems on Graphs of Convex Sets (GCS).
We propose GCS*, a forward heuristic search algorithm that generalizes A* search to the GCS setting, where a continuous-valued decision is made at each graph vertex, and constraints across graph edges couple these decisions, influencing costs and feasibility. 
Such mixed discrete-continuous planning is needed in many domains, including motion planning around obstacles and planning through contact.
This setting provides a unique challenge for best-first search algorithms: 
the cost and feasibility of a path depend on continuous-valued points chosen along the entire path.
We show that by pruning paths that are cost-dominated over their entire terminal vertex, GCS* can search efficiently while still guaranteeing cost-optimality and completeness.
To find satisficing solutions quickly, we also present a complete but suboptimal variation, pruning instead reachability-dominated paths.
We implement these checks using polyhedral-containment or sampling-based methods. 
The former implementation is complete and cost-optimal, while the latter is probabilistically complete and asymptotically cost-optimal and performs effectively even with minimal samples in practice.
We demonstrate GCS* on planar pushing tasks where the combinatorial explosion of contact modes renders prior methods intractable and show it performs favorably compared to the state-of-the-art.
Project website: \href{https://shaoyuan.cc/research/gcs-star/}{shaoyuan.cc/research/gcs-star/}

\keywords{Graph Search \and Task and Motion Planning \and Manipulation \and Convex Optimization \and Algorithmic Completeness and Complexity.}
\end{abstract}
\vspace{-20pt}
\section{Introduction}
\label{sec:introduction}
\looseness=-1Many real-world planning problems involve making discrete and continuous decisions jointly.  Collision-free motion planning selects whether to go left or right around an obstacle along with a continuous trajectory to do so. In Task and Motion Planning (TAMP), discrete task-level decisions about the type and sequence of actions are intimately coupled with continuous robot motions and object configurations. For example, where a robot grasps a hockey stick impacts its ability to hook an object \cite{maoLearningReusableManipulation2023}; 
in the construction of a tower, the order in which materials are assembled, as well as their geometric relationships, 
affect stability \cite{hartmannLongHorizonMultiRobotRearrangement2023,toussaintLogicGeometricProgrammingOptimizationBased2015}.
\begin{figure}[h]
    \begin{subfigure}[t]{1.0\linewidth}
	\centering
	\includegraphics[trim=0 40 0 0, clip, width=1\linewidth]{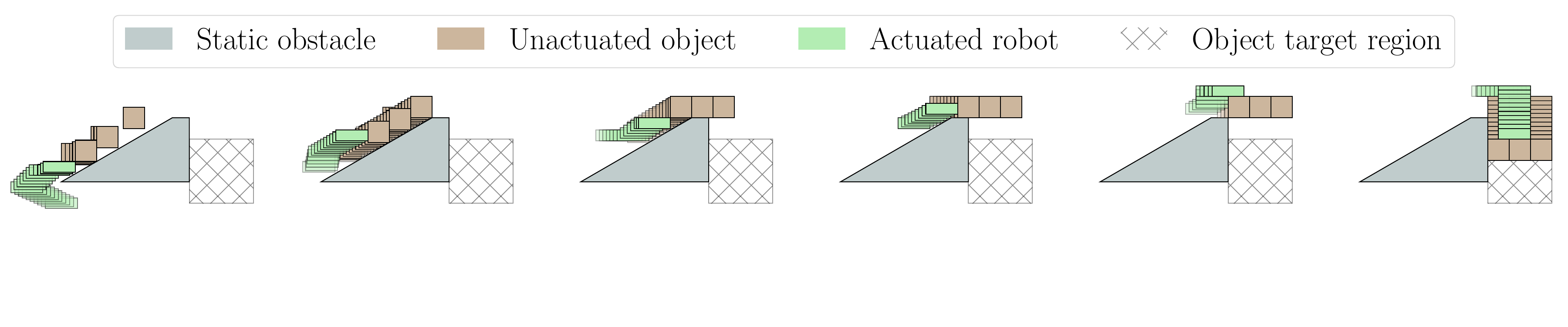}
    \end{subfigure}\vspace{-0.5cm}
    \caption{
        An $\epsilon$-suboptimal solution found in 21.9s by GCS* using sampling-based $\textsc{ReachesCheaper}$ domination checks on the STACK planar pushing task.
        STACK is formulated as a GCS problem with approximately $1.3 \times 10^{9}$ vertices and up to $8.5 \times 10^{17}$ edges.\vspace{-0.25cm}
    }
    \label{fig:hero}
\end{figure}

A natural representation of such a problem structure is the Graph of Convex Sets (GCS) \cite{marcucciShortestPathsGraphs2024}.  In a GCS, graph vertices represent these discrete choices, with edges denoting allowed transitions.  A continuous decision within each vertex is encoded by a \textit{point} constrained to lie within a convex set associated with the vertex.  For example, the free configuration space around an obstacle may be (sometimes approximately) decomposed into a number of intersecting convex regions in which trajectory segments can be planned.
The point associated with each vertex, which encodes the motion within the corresponding free-space region, could be, for example, a sequence of positions at a fixed number of knot points \cite{marcucciMotionPlanningObstacles2023}. 
In the case of a single knot point, the associated vertex's convex set is simply the corresponding convex free-space set.  In the case of more knot points, the vertex's set would be the higher-dimensional set in which each knot point is contained within the free-space set.  
In this example, intersecting free-space regions lead to graph edges. A valid way to get around the obstacle involves a \textit{path} -- a sequence of vertices -- through the graph, along with a \textit{trajectory} -- a sequence of continuous-valued points assigned to vertices on the path.
Importantly, in such problems, there may be constraints coupling points across edges, presenting a fundamental difference from classical graph search problems.  For example, for trajectory continuity, knot points across an edge must coincide.

\looseness=-1While Shortest-Path Problems (SPPs) in discrete graphs can be solved in polynomial time, SPPs in GCS are NP-hard \cite{marcucciShortestPathsGraphs2024}.  GCS can be solved with a Mixed-Integer Convex Program (MICP) formulation; Marcucci et al. give a notably scalable transcription of SPPs in GCS with a tight convex relaxation \cite{marcucciShortestPathsGraphs2024}.  However, these approaches become intractable as the graph grows.
Many discrete-continuous problems exhibit a combinatorial explosion of discrete modes, e.g., in a manipulation task where contact dynamics between every pair of bodies are considered. In TAMP, the search space is typically so large that explicit enumeration of the search space is impractical.
\begin{figure}[h]
    \centering
    \begin{subfigure}[b]{0.4\linewidth}
        \centering
        \raisebox{10pt}{
        \includegraphics[trim=0 0 0 0, clip, width=\textwidth]{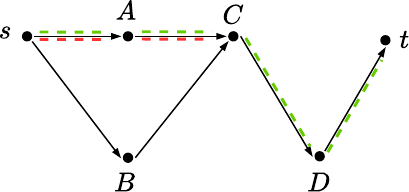}}\vspace{-0.2cm}
        \caption{}
    \end{subfigure}
    \hspace{0.05\textwidth}
    \begin{subfigure}[b]{0.4\linewidth}
        \centering 
        \includegraphics[trim=50 0 50 10, clip, width=\textwidth]{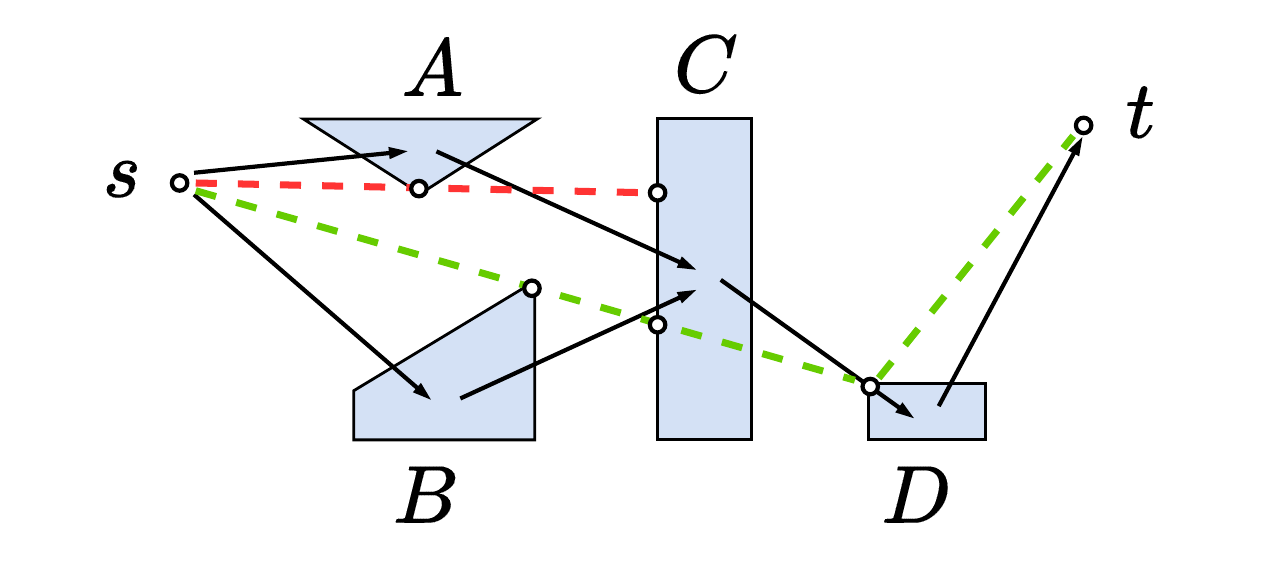}\vspace{-0.2cm}
        \caption{}
    \end{subfigure}
\caption{An abstract discrete graph (a) and GCS (b), for example, mapping to the path planning problem for a hopping robot tasked to hop from $s$ to $t$ across polygonal stepping stones.
    Arrows are edges between vertices. Vertices are represented as black dots in (a) and via their (blue) convex sets in (b). White dots are trajectory points on the GCS.
    Edge costs are the Euclidean distances traversed between vertices.
    A discrete graph (a) has the optimal substructure property while a GCS (b) does not.
    In (a), the optimal path between $s$ and $C$ (red) is a sub-path of the optimal path between $s$ and $t$ (green), which goes through $C$. In (b), the optimal path between $s$ and $C$ is not a sub-path of the optimal path between $s$ and $t$, which goes through $C$.  However, each \textit{point} on the green path's trajectory is reached optimally on the green path.\vspace{-0.22cm}
}
\label{fig:optimal_substructure}
\end{figure}
For such problems, we can instead define a graph implicitly via a source vertex and a successor operator that acts on a vertex and returns all outgoing edges. Implicit graph search algorithms use this representation to solve SPPs without loading the entire graph in memory.
A* \cite{hartFormalBasisHeuristic1968} is forward heuristic search algorithm which can solve the SPP on implicit discrete graphs.
However, key differences between discrete graphs and GCS make search in GCS more challenging.

\looseness=-1 The SPP on a discrete graph has the optimal substructure property \cite{cormenIntroductionAlgorithms2022}: All sub-paths of an optimal path between two vertices are themselves optimal paths between the first and last vertices they traverse. 
A* (or any best-first search algorithm) leverages this property, pruning a sub-path when its cost-to-come is dominated by another sub-path terminating in the same vertex because it cannot be a sub-path of any optimal path \cite{russellArtificialIntelligenceModern2021}.
However, the SPP on a GCS lacks the optimal substructure property. Importantly, as exemplified in \cref{fig:optimal_substructure} and by Morozov et al. \cite{morozovMultiQueryShortestPathProblem2024}, \textit{a sub-path of an optimal path on a GCS is not necessarily an optimal path between that subpath's first and last vertices}. Thus, when considering whether a partial path to the target could be a sub-path of an optimal path, it is insufficient to ask whether it is the cheapest way to reach its terminal vertex. 
As a result, a naive application of A* to GCS could prune an important sub-path, preventing the optimal path from being returned.
We leverage the key insight that, while the optimal substructure property does not hold on the path level in GCS, it does hold on the trajectory level: Given an optimal path and corresponding trajectory, any sub-path is optimal to reach its own final point, as exemplified in \cref{fig:optimal_substructure} (b). 
That is, while an optimal path may not contain an optimal subpath to reach each \textit{vertex} it visits, its trajectory is optimal to reach each \textit{point} it visits.
In order to maintain this property, we place no restrictions on the number of times a path may revisit vertices \cite{morozovMultiQueryShortestPathProblem2024}. 
Formally, paths that may revisit vertices are generally referred to as ``walks,'' but for readability we continue to use  ``path'' even though this term traditionally excludes cycles.

\looseness=-1 The trajectory-level optimal substructure property holds because the optimal way to reach a point is independent of the optimal way to continue on from it.  
However, this is only true if visiting a particular vertex on the way to that point does not exclude that vertex from being visited subsequently.  
It is well-known that shortest paths on discrete graphs never revisit vertices \cite{cormenIntroductionAlgorithms2022}; however, this does not hold on a GCS because a different point may be visited each time.

More severe than the loss of optimality is the loss of completeness. Because a GCS may have constraints coupling points across edges, a path may not be feasible even if edges connect the sequence of vertices.
As a result, A* on a GCS might prune a candidate path even though the cheaper alternate path is infeasible for reaching the target.
An example is shown in \cref{fig:lose_completeness}.

\begin{figure}[h]
	\centering
	\includegraphics[trim=0 5 0 5, clip, width=0.55\linewidth]{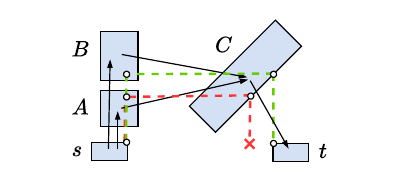}\vspace{-0.2cm}
	\caption{
        A contrived example of a GCS where points spanning edges $(s,A)$, $(s,B)$, and $(C,t)$ are constrained to be vertically aligned, and points spanning $(A,C)$ and $(B,C)$ horizontally aligned. 
        For example, representing a path planning problem for a hopping robot that can only hop in axis-aligned directions.
        A* prunes $[s,B,C]$ since it has higher cost-to-come than $[s,A,C]$. However, $[s,A,C,t]$ is infeasible (red), while $[s,B,C,t]$ is feasible (green).\vspace{-0.2cm}
	}
	\label{fig:lose_completeness}
\end{figure}

In this paper, we address these challenges and generalize A* to the GCS setting. We call our algorithm GCS* (\cref{sec:method}), leveraging two key insights. First, to retain optimality, the algorithm must keep track of every path that reaches \textit{any} point in a set more cheaply than alternate paths. 
Second, to retain completeness, the algorithm only needs to keep track of every path that reaches previously unreached points within a set.
These insights give rise to domination checks we call \textsc{ReachesCheaper} and \textsc{ReachesNew}, respectively (\cref{sec:domination_checks}). Using \textsc{ReachesCheaper} with an admissible heuristic makes GCS* optimal, while using \textsc{ReachesNew} prunes paths more aggressively, sacrificing optimality for speed, but retaining completeness\footnote{Similar to A* \cite{russellArtificialIntelligenceModern2021}, GCS* is complete if a solution exists or the user defines a limit on the length of paths that can be returned.}. 
We present polyhedral-containment-based implementations of these checks and prove completeness and optimality of GCS* (\cref{sec:algorithm_properties}).  However, we achieve significant speed improvements with only mildly suboptimal results with sampling-based implementations, under which GCS* is probabilisitically complete and asymptotically cost optimal, i.e., complete and optimal in the limit of infinite samples.
The theoretical properties of each variant of the algorithm are summarized in \cref{tab:task_results}.

Finally, we formulate planar pushing as a GCS problem in \cref{sec:planar_pushing} and share GCS* results for tasks with multiple moving objects and multi-step solutions in \cref{sec:results}.  Such problems are notorious for combinatorially exploding discrete options.  
For example, the STACK task (\cref{fig:hero}) leads to approximately $10^9$ sets and up to $10^{18}$ edges, making it wildly intractable for methods that require loading the graph into memory, but sampling-based GCS* finds a solution in 21.9 seconds.
Two key strengths of the best-first search framework extend to GCS*. First, GCS* is able to solve huge problems that are intractable for methods which require explicit construction of the graph.
Second, search effort is proportional to query difficulty.
In contrast, methods that solve the problem as a single optimization program require full effort regardless of query difficulty.
Similar to A*, GCS* also guides the search using a priority function that sums the cost-to-come and heuristic cost-to-go. Exactly as in weighted A*, inflation of the heuristic can be used to trade-off bounded sub-optimality with speed \cite{pohlHeuristicSearchViewed1970}.

\vspace{-15pt}
\section{Related Work}
\label{sec:related_work}
\vspace{-10pt}
\looseness=-1 We overview three approaches to discrete-continuous planning: sampling-based, optimization-based, and hybrid methods that combine graph search with sampling or optimization.
Sampling-based methods, like  Rapidly-exploring Random Trees \cite{lavalleRapidlyexploringRandomTrees1998} and Probabilistic Roadmap Method \cite{kavrakiProbabilisticRoadmapsPath1996} were developed for path planning with obstacles and extended for contact-rich manipulation \cite{chengEnhancingDexterityRobotic2023,pangGlobalPlanningContactRich2023}.
Unlike nonconvex trajectory optimization, which can fail to find feasible solutions, sampling-based methods are typically probabilistically complete. However, incorporating continuous differential constraints on discrete samples is challenging, often making kinodynamic versions of these algorithms less effective \cite{marcucciMotionPlanningObstacles2023}.

\looseness=-1To achieve both the completeness of sampling-based methods and trajectory optimization's ease of handling continuous constraints, some have proposed posing the entire problem as a single MICP, solving a convex relaxation, and applying a cheap rounding step to recover a feasible solution. This strategy has been applied to motion planning around obstacles for high-dimensional robotic systems \cite{marcucciMotionPlanningObstacles2023} and planar pushing of a single object \cite{graesdalTightConvexRelaxations2024}. We extend these benefits to the class of  problems too large be posed as a single optimization program.

\looseness=-1To solve large problems modelled implicitly,
many TAMP algorithms perform a tree search, alternating between choosing a discrete action and sampling continuous action parameters \cite{garrettIntegratedTaskMotion2021}.
However, using sampling to determine feasibility of a sequence of abstract actions does not provide a guaranteed answer in finite time. Other hybrid methods combine graph search with optimization over continuous variables. Using optimization to set path expansion priority allows continuous considerations to closely inform discrete search.
Logic-Geometric Programming \cite{toussaintLogicGeometricProgrammingOptimizationBased2015,toussaintMultiboundTreeSearch2017} uses non-linear programming (NLP) to guide the search, but the non-convexity of NLP leads to the loss of completeness and optimality.
We build on these ideas, additionally requiring a convex decomposition of configuration space. While this can be non-trivial to generate \cite{petersenGrowingConvexCollisionFree2023,daiCertifiedPolyhedralDecompositions2023,wernerApproximatingRobotConfiguration2024}, convex optimization allows us to determine the feasibility of a path in GCS absolutely, and perform domination checks that enable completeness and global optimality.

Recent works have proposed solving SPPs on GCS with graph search. $A^*$-$GCS$ \cite{sundarGraphsConvexSets2024} solves the convex relaxation on increasingly large sub-graphs, using successive solutions to guide sub-graph growth. In this work, we instead iteratively find solutions for sub-paths rather than sub-graphs, which scales better to larger graphs.
INSATxGCS (IxG) and IxG* \cite{natarajanImplicitGraphSearch2024} adapt INterleaved Search and Trajectory Optimization (INSAT) \cite{natarajanInterleavingGraphSearch2021} for collision-free GCS motion planning \cite{marcucciMotionPlanningObstacles2023}. These algorithms demonstrate impressive speedups over the convex relaxation and rounding approach \cite{marcucciMotionPlanningObstacles2023} across a range of tasks including multi-arm assembly. However, IxG is only complete subject to the assumption that all edge constraints along any path in the graph can be satisfied; In reality, individually feasible edge constraints can still be jointly infeasible, as in \cref{fig:lose_completeness}.
Edge constraints add significant expressivity to GCS and are heavily used, including by Marcucci et al. \cite{marcucciMotionPlanningObstacles2023} and Graesdal et al. \cite{graesdalTightConvexRelaxations2024}.
While IxG* is complete without this assumption, it uses IxG to generate an upper bound on the optimal cost. Supposing IxG fails to find a feasible path, IxG* will use a trivial (infinite) upper bound. Then IxG* will not prune \emph{any} paths, rendering large problems intractable.
Furthermore, while the main search of IxG and IxG* can be done on implicit graphs, both methods use a precomputed heuristic that requires explicit graphs. We build on the this approach, addressing the case where graphs are too large to be explicitly constructed, e.g. the STACK task (\cref{fig:hero}).

\section{Problem Formulation}
\label{sec:problem_formulation}
\vspace{-10pt}
In this section we present the problem of interest as a mathematical program.
\begin{table}[h]
    \centering
    \vspace{-15pt}
    \caption{Summary of mathematical notation \vspace{.1cm}}
    \label{tab:notation}
    \footnotesize
    \begin{tabularx}{\linewidth}{@{\hspace{8pt}}l@{\hspace{8pt}}@{\hspace{8pt}}X@{\hspace{8pt}}}
        \hline
        $G := (\mathcal{V},\mathcal{E})$ & Directed graph with vertices $\mathcal{V}$ and edges $\mathcal{E}$ \\
        $\mathcal{X}_v$ & Convex set associated with vertex $v$ \\
        $\mathcal{X}_e$ & Convex constraint set for edge $e$ \\
        $s, t$ & Source and target vertices \\
        $\mathbf{v}$ & Discrete path in the graph, defined as a sequence of vertices \\
        $\mathbf{v}_i$ & The $i$-th vertex in the path $\mathbf{v}$ \\
        $\mathbf{v}_\text{end}$ & Last vertex in the path $\mathbf{v}$ \\
        $\mathcal{E}_{\mathbf{v}}$ & Set of edges traversed by path $\mathbf{v}$ \\
        $\mathbf{x}$ & Continuous trajectory\\
        $\mathbf{x}_i$ & The $i$-th point on trajectory $\mathbf{x}$\\
        $\mathbf{x}_i\in \mathcal{X}_v$ & The point on trajectory $\mathbf{x}$ associated with vertex $v$\\
        $c(\mathbf{x}_u, \mathbf{x}_v)$ & Cost of the edge between vertices $u$ and $v$, evaluated on trajectory $\mathbf{x}$\\
        $f^*(v)$ & Optimal cost of a solution via vertex $v$ \\
        $g^*(x)$ & Optimal cost-to-come from $s$ to point $x$ \\
        $h^*(x)$ & Optimal cost-to-go from point $x$ to $t$ \\
        $\tilde{f}(\mathbf{v})$ & Total cost estimate via path $\mathbf{v}$ \\
        $\tilde{f}(\mathbf{v}, x)$ & Total cost estimate via path $\mathbf{v}$ and point $x \in \mathcal{X}_{\mathbf{v}_{\text{end}}}$ \\
        $\tilde{g}(\mathbf{v}, x)$ & Optimal cost-to-come from $s$ to point $x \in \mathcal{X}_{\mathbf{v}_{\text{end}}}$ via path $\mathbf{v}$ \\
        $\tilde{h}(x)$ & Heuristic estimate of cost-to-go from point $x$ to $t$ \\
        $\mathbf{x}^*(\mathbf{v})$ & Optimal trajectory via path $\mathbf{v}$ subject to the heuristic $\tilde{h}$\\
        \hline
    \end{tabularx}
    \vspace{-5pt}
\end{table}

A GCS \cite{marcucciShortestPathsGraphs2024} is a directed graph $G := (\mathcal{V},\mathcal{E})$ defined by a (potentially infinite) set of vertices $\mathcal{V}$ and edges $\mathcal{E} \subset \mathcal{V}^2$, where $(u,v)\in \mathcal{E}$ if the graph allows transitions from vertex $u$ to vertex $v$, i.e., $v$ is a \emph{successor} of $u$.
Each vertex $v \in \mathcal{V}$ is paired with a compact convex set $\mathcal{X}_v$. 
These sets may live in different spaces.
We assume each vertex has a finite number of successors. The graph is implicitly defined via source vertex $s \in \mathcal{V}$ and operator $\textsc{Successors}$ that, when applied to a vertex $u$, returns all successors $v_i$ of $u$.
We define a path $\bf{v}$ in graph $G$ as a sequence of vertices, where the bold font indicates a sequence. A subscripted index operates on a sequence to select the element at that index: $\mathbf{v}_{i}$ gives the $i$-th vertex in the sequence and each $\mathbf{v}_{i+1}$ is a successor of $\mathbf{v}_i$.
Likewise, we define a trajectory $\mathbf{x}$ on path $\mathbf{v}$ as a sequence of points in the sets associated with $\mathbf{v}$, $\mathbf{x}_i\in \mathcal{X}_{\mathbf{v}_i}$.  We also overload the indexing of $\mathbf{x}$ to go by vertex, such that $\mathbf{x}_{\mathbf{v}_i}:= \mathbf{x}_i$.
We denote the last vertex in the path $\bf{v}$ as $\mathbf{v}_\text{end}$, and the edges traversed by $\bf{v}$ as $\mathcal{E}_{\bf{v}}$.
The cost of edge $e:=(u,v)$ is determined by a proper, closed, convex, positive, bounded-away-from-zero function of its endpoints $c(\mathbf{x}_u, \mathbf{x}_v)$. The bounded-away-from-zero stipulation ensures finite optimal paths when cycles are permitted. 
Each edge can additionally have constraints $(\mathbf{x}_u,\mathbf{x}_v) \in \mathcal{X}_e$, where $\mathcal{X}_e$ is a closed convex set.
For source vertex $s \in \mathcal{V}$ and target vertex $t \in \mathcal{V}$, the SPP in GCS is \vspace{-0.2cm}
\begin{subequations}
\begin{align}
    \minimize_{\mathbf{v},\: \mathbf{x}} \quad & \sum_{(u,v) \in \mathcal{E}_{\mathbf{v}}} c(\mathbf{x}_u,\mathbf{x}_v) \label{eq:spp_gcs:objective} \\
    \subto \quad & \mathbf{v}_0=s, \: \mathbf{v}_\text{end}=t, \label{eq:spp_gcs:start_end}\\
            & \mathbf{v}_{i+1}\in \Call{Successors}{\mathbf{v}_i}, &&\forall i\in [0, \text{end}-1], \label{eq:spp_gcs:valid_path} \\
           \quad & \mathbf{x}_v \in \mathcal{X}_v, && \forall v \in \mathbf{v}, \label{eq:spp_gcs:point_in_set}\\
           \quad & (\mathbf{x}_u, \mathbf{x}_v) \in \mathcal{X}_e, && \forall e := (u, v) \in \mathcal{E}_{\mathbf{v}}. \label{eq:spp_gcs:valid_edges}
\end{align}
\label{eq:spp_gcs}\vspace{-0.5cm}
\end{subequations}

\noindent
The objective (\ref{eq:spp_gcs:objective}) is to minimize the total cost of traversing the path $\mathbf{v}$ in the graph.
Constraint (\ref{eq:spp_gcs:start_end}) enforces that path $\mathbf{v}$ starts at source vertex $s$, and ends at target vertex $t$. 
Constraint (\ref{eq:spp_gcs:valid_path}) enforces that each edge in the path $\mathbf{v}$ exists.
Constraint (\ref{eq:spp_gcs:point_in_set}) enforces that each point on the trajectory $\mathbf{x}$ lies within the convex set corresponding to its vertex.
Finally, constraint (\ref{eq:spp_gcs:valid_edges}) enforces that the continuous values of trajectory $\mathbf{x}$ satisfy all edge constraints along path $\mathbf{v}$.

If the discrete path $\mathbf{v}$ is fixed, prog. (\ref{eq:spp_gcs}) becomes convex and easy to solve; we call this program \textsc{ConvexRestriction}. Given $\mathbf{v}$, solving \textsc{ConvexRestriction} determines its corresponding optimal cost and optimal trajectory $\mathbf{x}$.

\vspace{-10pt}
\section{Approach}
\label{sec:method}
\subsection{Graph Search Formulation}
A* (\cref{alg:a_star}) searches over paths from the source, pruning some and expanding others based on estimated costs of extending to reach the target, until a path reaches the target and is returned. 
In particular, 
paths await expansion in a priority queue ${Q}$ ordered by total cost estimate $\tilde{f}$ (\cref{line:popped_from_Q}). Often, instead of specifying $\tilde{f}$ directly, a user equivalently provides  $\tilde{h}$, a heuristic that estimates costs-to-go from vertices to the target, and
$\tilde{f}$ is computed as the sum of heuristic $\tilde{h}$ and cost-to-come $\tilde{g}$. 
A path is pruned via a domination check if its cost-to-come $\tilde{g}$ is greater than that of the current cheapest path reaching the same vertex (\cref{line:domination_check}). This current cheapest path for each vertex is stored in a map $S$ (\cref{line:add_to_S}),
that maps each vertex to the current lowest-cost path.
Equivalently, $\tilde{f}$ can be compared since paths reaching the same vertex share an $\tilde{h}$ value.
We use the term ``domination check'' generically to refer to a function that determines whether a candidate path should be pruned or added to $Q$.
We call a path \textit{expanded} if it has been popped from $Q$ (\cref{line:popped_from_Q}).
We call a vertex \textit{expanded} if any path terminating at it has been expanded.

\looseness=-1GCS* (\cref{alg:gcs_star}) proceeds similarly. 
The key difference is the use of a different domination check $\textsc{NotDominated} \in$ $\{\textsc{ReachesCheaper}, \textsc{ReachesNew} \}$ (defined in \cref{sec:domination_checks}).
To facilitate these domination checks, GCS* maintains a map ${S}$ which maps each vertex to a set of un-pruned paths reaching the vertex. This is in contrast to A*, which stores a single path reaching each vertex. 
For any vertex $v \in \mathcal{V}$, we define $f^*(v)$ as the optimal cost of prog.\ (\ref{eq:spp_gcs}) with the additional constraint that vertex $v$ is on the path $\mathbf{v}$ (not necessarily as the terminal vertex $\mathbf{v}_\text{end}$), i.e., $v\in \mathbf{v}$, where path $\mathbf{v}$ is a decision variable in prog.\ (\ref{eq:spp_gcs}).
Let $g^*(x)$ be the optimal cost-to-come from source $s$ to point $x$ , and $h^*(x)$ be the optimal cost-to-go from point $x$ to target $t$, both infinite if infeasible.\footnote{More precisely, every ``function'' $l(x)$ that takes a point $x$ as input in this paper is actually a family of functions $\{l_v: \mathcal{X}_v \to \mathbb{R} \mid v \in V\}$ defined over each vertex, as these points may lie in different spaces.  These functions are defined to agree across vertices when these vertex sets share points, $l_{v_1}(x) = l_{v_2}(x)\ \forall x\in \mathcal{X}_{v_1}\cap \mathcal{X}_{v_2}$. For clarity, we drop the subscript and treat $l(x)$ as a single function.}   If point $x\in \mathcal{X}_v$ is on the optimal path through vertex $v$, we have $f^*(v) = g^*(x)+ h^*(x)$. 
Note that $f^*(x)$, $g^*(x)$, and $h^*(x)$ are well defined even if point $x$ is contained in (intersecting) sets corresponding to multiple vertices.

\looseness=-1In general, $f^*, g^*,$ and $h^*$ cannot be computed without solving prog.\ (\ref{eq:spp_gcs}). 
Instead, we define computationally viable functions $\tilde{f}, \tilde{g}$ and $\tilde{h}$.
The A* algorithm uses a heuristic function for estimating the cost-to-go from a vertex to the target. Extending this to GCS*, we use a heuristic function $\tilde{h}(x)$ that estimates the cost-to-go from a point $x$ to the target $t$.

Next, we define $\tilde{g}$ and $\tilde{f}$. No longer referring to the decision variables from prog.\ (\ref{eq:spp_gcs}), let $\mathbf{v}$
be a candidate partial path to the target $t$ such that $\mathbf{v}_0 = s$, but $\mathbf{v}_\text{end}$ is not necessarily $t$.  
Let $\mathbf{x}^*(\mathbf{v})$ denote an optimal trajectory through $\mathbf{v}$ subject to the heuristic $\tilde{h}$ assigning the cost from the final point in the optimal partial trajectory $\mathbf{x}^*(\mathbf{v})_\text{end}$ to target $t$.  
Then $\tilde{g}(\mathbf{v}, x)$ for $x\in \mathcal{X}_{\mathbf{v}_\text{end}}$ is the optimal cost-to-come from source $s$ to point $x$ via partial path $\mathbf{v}$.
Then, the total cost estimate via a partial path $\mathbf{v}$ is defined to be $\tilde{f}(\mathbf{v}):= \tilde{g}(\mathbf{v}, \mathbf{x}^*(\mathbf{v})_\text{end}) + \tilde{h}(\mathbf{x}^*(\mathbf{v})_\text{end})$.  

\noindent
\begin{minipage}[t]{0.49\textwidth}
\vspace{-0.4cm}
    \centering
    \begin{algorithm}[H]
    \caption{Discrete A*}
    \label{alg:a_star}
    \begin{algorithmic}[1]
    \AlgInput $s$, $t$, $\tilde{f}$, $\tilde{g}$, $\textsc{Successors}$
    \Statex \vspace{-2pt}
    \AlgOutput Path from $s$ to $t$ or \texttt{Fail}
    \State $\mathbf{v} = [s]$ 
    \State \tikzmk{A}$S \gets \{s: \mathbf{v}\}$\tikzmk{B}\boxita{gray} \Comment{\small{Map of paths}}
    \State \resizebox{\hsize}{!}{$Q \gets \text{priority queue ordered by } \tilde{f}(\mathbf{v})$}
    \State $Q.\Call{add}{\mathbf{v}}$
    
    \While{$Q \neq \emptyset$}
        \State $\mathbf{v} \gets$ $Q.\Call{Pop}{ }$
        \If{$\mathbf{v}_\text{end}$ = t} \Return $\mathbf{v}$
        \EndIf
        \ForAll{$v' \in \Call{Successors}{\mathbf{v}_\text{end}}$}
            \State $\mathbf{v}' = [\mathbf{v}, v']$
            \tikzmk{A}
            \If{$v' \notin S$ or $\tilde{g}(\mathbf{v}') < \tilde{g}(S[v'])$}
                \State $S[v'] = \mathbf{v}'$\tikzmk{B}\boxitb{gray}
                \State $Q.\Call{Add}{\mathbf{v}'}$
            \EndIf
        \EndFor
    \EndWhile
    \State \Return \texttt{Fail}
    \end{algorithmic}
    \end{algorithm}
    \vspace{0.05cm}
\end{minipage}
\hfill
\hspace{0.01\textwidth}
\begin{minipage}[t]{0.5\textwidth}\vspace{-0.4cm}
    \centering
    \begin{algorithm}[H]
    
    \begin{algorithmic}[1]
    \AlgInput $s$, $t$, $\tilde{f}$, $\textsc{NotDominated}$, $\textsc{Successors}$
    
    \AlgOutput Path from $s$ to $t$ or \texttt{Fail}
    \State $\mathbf{v} = [s]$
    \State
    \tikzmk{A}
    $S \gets \{s: \{\mathbf{v}\}\}$ \tikzmk{B}\boxitcb{gray}\Comment{\small{Map of sets of paths}}
    
    \State \resizebox{\hsize}{!}{$Q \gets \text{priority queue ordered by } \tilde{f}(\mathbf{v})$}\label{line:priority_queue}
    \State $Q.\Call{add}{\mathbf{v}}$
    
    \While{$Q \neq \emptyset$}
        \State $\mathbf{v} \gets$ $Q.\Call{Pop}{ }$\label{line:popped_from_Q}
        \If{$\mathbf{v}_\text{end}$ = t} \Return $\mathbf{v}$ \EndIf
        \ForAll{$v' \in \Call{Successors}{\mathbf{v}_\text{end}}$}
            \State $\mathbf{v}' = [\mathbf{v}, v']$
            \tikzmk{A}
            \If{$\Call{NotDominated}{\mathbf{v}', S[v']}$} \label{line:domination_check}
                \State $S[v'].\Call{Add}{\mathbf{v}'}$\label{line:add_to_S}\tikzmk{B}\boxitd{gray}
                \State $Q.\Call{Add}{\mathbf{v}'}$\label{line:add_to_Q}
            \EndIf
        \EndFor
    \EndWhile
    \State \Return \texttt{Fail}
    \end{algorithmic}
    \caption{GCS*}
    \label{alg:gcs_star}
    \end{algorithm}
\end{minipage}
To evaluate the total cost estimate for a path additionally restricted to pass through a specific point $x\in \mathcal{X}_{\mathbf{v}_\text{end}}$, we overload $\tilde{f}(\mathbf{v}, x) := \tilde{g}(\mathbf{v}, x) + \tilde{h}(x)$.
$\textsc{ConvexRestriction}$ can be used to evaluate
$\tilde{g}(\mathbf{v}, x)$ and  $\tilde{f}(\mathbf{v}, x)$ (used to evaluate sample points in \cref{sec:sampling_based}). Additionally, if $\tilde{h}$ is convex\footnote{More precisely, $\tilde{h}_v$ is convex for all $v\in \mathcal{V}$.}, $\textsc{ConvexRestriction}$ can be used to evaluate $\mathbf{x}^*(\mathbf{v})$ (used to return the final optimal trajectory) and $\tilde{f}(\mathbf{v})$ (used to prioritize the queue $Q$ in \cref{alg:gcs_star}). 

We assume heuristic $\tilde{h}(x)$ is nonnegative, i.e., $\tilde{h}(x) \geq 0, \: \forall x$. Furthermore, heuristic $\tilde{h}$ must be \textit{pointwise admissible} (\cref{def:pointwise_admissible}) in order for optimality guarantees (\cref{sec:algorithm_properties}) to hold. We extend the classical definition of admissibility over vertices to a pointwise definition:
\begin{definition}
    \label{def:pointwise_admissible}
    A heuristic function $\tilde{h}$ is pointwise admissible if
    \[
    \tilde{h}(x) \leq h^*(x), \quad \forall x \in \mathcal{X}_v, \quad \forall v \in \mathcal{V}.
    \]
\end{definition}

\vspace{-10pt}
\subsection{Domination Checks}
In \cref{line:domination_check} of \cref{alg:gcs_star}, GCS* uses one of two domination checks, \textsc{ReachesCheaper} or \textsc{ReachesNew}. In practice, exact checks are not tractable, so 
 we compute approximate checks.  An approximate \textsc{NotDominated} check is said to be \emph{conservative} if it never returns $\texttt{False}$ incorrectly, i.e., it never says a candidate path is dominated when it is not.  Under a conservative domination check, GCS* may track more candidate paths than necessary, but never overlooks an important path (which paths are ``important'' depends on the domination check being used and will be precisely stated in this section). Using exact or conservative \textsc{ReachesCheaper} checks, GCS* is cost optimal. Satisficing solutions (solutions which are feasible but likely suboptimal) can be found more quickly using \textsc{ReachesNew}. Using exact or conservative \textsc{ReachesNew} checks, GCS* is complete.

\begin{figure*}[h]
	\centering
	\begin{subfigure}[b]{0.3\linewidth}
			\centering
			\includegraphics[width=\linewidth]{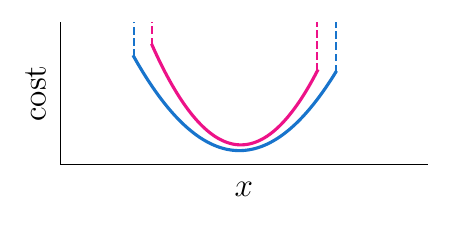}\vspace{-0.3cm}
       \caption{
       $\textsc{RC}=\texttt{False}, \: \textsc{RN}=\texttt{False}$
       }
    \end{subfigure}
    \centering
	\begin{subfigure}[b]{0.3\linewidth}
			\centering
			\includegraphics[width=\linewidth]{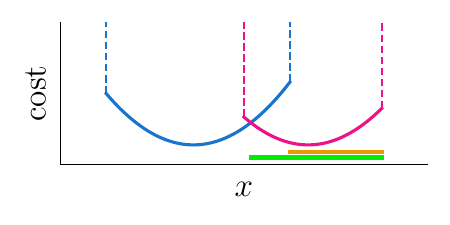}\vspace{-0.3cm}
       \caption{
           $\textsc{RC}=\texttt{True}, \: \textsc{RN}=\texttt{True}$
       }
    \end{subfigure}
    \centering
	\begin{subfigure}[b]{0.3\linewidth}
			\centering
			\includegraphics[width=\linewidth]{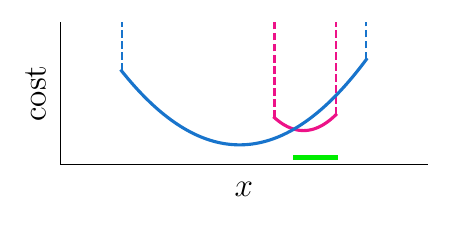}\vspace{-0.3cm}
       \caption{
       $\textsc{RC}=\texttt{True}, \: \textsc{RN}=\texttt{False}$
       }
   \end{subfigure}
    \\ 
    \begin{subfigure}[b]{0.3\textwidth}
        \centering
        \includegraphics[width=\linewidth]{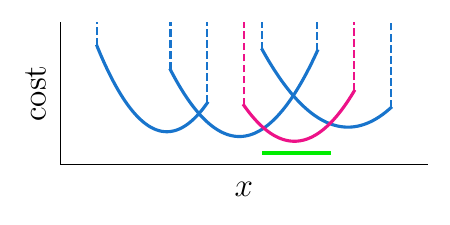}\vspace{-0.3cm}
        \caption{
           $\textsc{RC}=\texttt{True}, \: \textsc{RN}=\texttt{False}$
       }
    \end{subfigure}
    \begin{subfigure}[b]{0.3\textwidth}
        \centering
        \includegraphics[width=\textwidth]{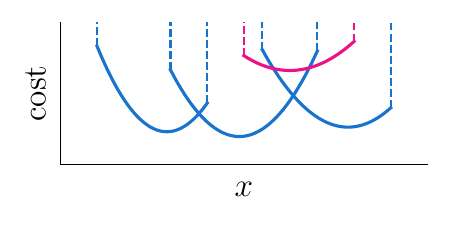}\vspace{-0.3cm}
        \caption{
           $\textsc{RC}=\texttt{False}, \: \textsc{RN}=\texttt{False}$
       }
    \end{subfigure}
    \begin{subfigure}[b]{0.27\textwidth}
        \centering
            \raisebox{5pt}{
        \includegraphics[trim=10 8 10 8, clip, width=\textwidth]{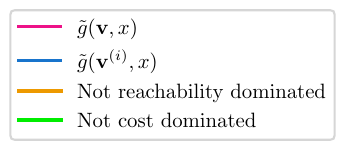}}
        \hfill
    \end{subfigure}
   \caption{Scenarios showing whether a candidate path $\mathbf{v}$ (pink) \textsc{ReachesCheaper} (RC) and \textsc{ReachesNew} (RN) compared to alternate paths $\{\mathbf{v}^{(i)} | i \in \{1,2,3\}\}$ (blue), where the dimension of the set $\mathcal{X}_{\mathbf{v}_{\text{end}}}$ is 1. Green and orange lines denote points $x$ such that \cref{eqn:reaches_cheaper} and \cref{eqn:reaches_new} hold, respectively. Points $x$ for which the cost-to-come $\tilde{g}(\mathbf{v}, x)$ is not drawn (outside dashed lines) are unreachable via path $\mathbf{v}$ due to constraints, and $\tilde{g}(\mathbf{v}, x)=\infty$.}
   \label{fig:comparison}
\end{figure*}

If a candidate path $\mathbf{v}$ reaches some point $x\in \mathcal{X}_{\mathbf{v}_\text{end}}$ cheaper than any way found yet, as in \cref{fig:comparison} (b), (c), and (d), we say it \textsc{ReachesCheaper}:
\begin{equation}
\label{eqn:reaches_cheaper}
\begin{aligned}
\Call{ReachesCheaper}{&\mathbf{v}, S[\mathbf{v}_\text{end}]} \: := \:
\exists \, x \in \mathcal{X}_{\mathbf{v}_\text{end}} \\
    & \text{\st} \:
    \tilde{g}(\mathbf{v}, x) < \tilde{g}(\mathbf{v}', x), \:
    \forall \, \mathbf{v}' \in S[\mathbf{v}_\text{end}].
\end{aligned}
\end{equation}
If $\Call{ReachesCheaper}{\mathbf{v}, S[\mathbf{v}_\text{end}]}= \texttt{False}$, path $\mathbf{v}$ is said to be \emph{dominated}, or, more specifically, \emph{cost-dominated}.
Intuitively, GCS* returns an optimal path when it has pruned only cost-dominated paths (discussed in \cref{sec:algorithm_properties}, proven in \cref{sec:appendix_proofs}) because such paths cannot be subpaths of optimal paths.

If a candidate path $\mathbf{v}$ reaches some point $x\in \mathcal{X}_{\mathbf{v}_\text{end}}$ that has not yet been reached, as in \cref{fig:comparison} (b), we say it \textsc{ReachesNew}:
\begin{equation}
\label{eqn:reaches_new}
\begin{aligned}
\Call{ReachesNew}{&\mathbf{v}, S[\mathbf{v}_\text{end}]} \: := \:
\exists \, x \in \mathcal{X}_{\mathbf{v}_\text{end}} \\
    & \text{\st} \:
    \left( \tilde{g}(\mathbf{v}, x) < \infty \right) \: \wedge \:
    \left( \tilde{g}(\mathbf{v}', x) = \infty, \:
    \forall \, \mathbf{v}' \in S[\mathbf{v}_\text{end}] \right).
\end{aligned}
\end{equation}
If $\Call{ReachesNew}{\mathbf{v}, S[\mathbf{v}_\text{end}]}= \texttt{False}$, $\mathbf{v}$ is said to be \emph{dominated}, or, more specifically, \emph{reachability-dominated}. Where $\textsc{ReachesCheaper}$ compares costs-to-come at all points in the terminal set, $\textsc{ReachesNew}$ compares feasibility of reaching these points.
Note that \textsc{ReachesNew} implies \textsc{ReachesCheaper} since the cost of any feasible path is finite, but \textsc{ReachesCheaper} does not imply \textsc{ReachesNew}. 

\subsubsection{Sampling-based Approximation.}
\label{sec:sampling_based}
The simplest and fastest approach to approximating \textsc{ReachesCheaper} and \textsc{ReachesNew} queries is to check whether sample points $x\in \mathcal{X}_{\mathbf{v}_\text{end}}$ meet the conditions on $x$ in \cref{eqn:reaches_cheaper} and \cref{eqn:reaches_new} respectively, returning \texttt{True} if any sampled point does. 
These approximations are not conservative: the candidate path not being dominated at some sampled point $x$ is sufficient to conclude that the candidate path is not dominated, but not necessary.
For example, in \cref{fig:comparison} (d), sparse sampling may miss the green interval where the cost-to-come of the candidate path $\tilde{g}(\mathbf{v},x)$ is lower than the others, leading $\textsc{ReachesCheaper}$ to return \texttt{False}, signaling that $\mathbf{v}$ is dominated, when in fact it is not.  
However, completeness and optimality are approached in the limit of infinite samples; in this sense we say GCS* is probabilistically complete and asymptotically optimal. We discuss this further in \cref{sec:sampling_properties}.

\subsubsection{Polyhedral-containment-based Approximation.}\label{sec:ah_containment}
We prove in \cref{sec:appendix_proofs} that conservative (necessary but not sufficient) approximations of \textsc{ReachesNew} and \textsc{ReachesCheaper} enable completeness and optimality guarantees.
We formulate such checks via polyhedral containment queries. 
One source of conservatism is that, unfortunately, these queries can check only \textit{single domination}: They compare candidate path $\mathbf{v}$ to individual alternate paths in $S[\mathbf{v}_\text{end}]$.
These checks can detect reachability domination like in \cref{fig:comparison} (a) and (c) and cost domination like in \cref{fig:comparison} (a), in which $\mathbf{v}$ is dominated by a single other path $\mathbf{v}^{(i)}$.  These checks cannot detect reachability domination like in (d) or (e), or cost domination like in (e), in which the candidate path $\mathbf{v}$ is dominated by several alternate paths $\mathbf{v}^{(i)}$ collectively but not by any individual alternate path $\mathbf{v}^{(i)}$.  Further conservatism occurs due to inability to compute exact containment queries tractably.  

We first re-define \textsc{ReachesCheaper} and \textsc{ReachesNew} in terms of set containment, before relaxing these conditions to tractable containment queries. For a path $\mathbf{v}$, we define the set in which a trajectory must reside ($\mathcal{P}_\mathbf{v}$), the set reachable via path $\mathbf{v}$ ($\mathcal{S}_\mathbf{v}$), and the epigraph of the optimal cost-to-come ($\mathcal{C}_\mathbf{v}$):
\begin{subequations}
\begin{equation}
\label{eq:feasible_trajectories}
    \mathcal{P}_\mathbf{v} := 
    \{\mathbf{x} \mid \mathbf{x}_v\in \mathcal{X}_{v}\ \forall \, v \in \mathbf{v}, \:
    (\mathbf{x}_u, \mathbf{x}_{v})\in \mathcal{X}_{(u,v)}\ \forall \, (u,v) \in \mathcal{E}_\mathbf{v}\}.
\end{equation}
\begin{equation}
\label{eq:reachable_set}
    \mathcal{S}_\mathbf{v}:= \{x\in \mathcal{X}_{\mathbf{v}_\text{end}} \mid \exists \, \mathbf{x}\in \mathcal{P}_\mathbf{v} \text{ s.t. } x = \mathbf{x}_{\text{end}}\}\subseteq \mathcal{X}_{\mathbf{v}_\text{end}}.
\end{equation}
\begin{equation}
\label{eq:cost_epigraph}
\mathcal{C}_\mathbf{v}:= 
\{(x,\, l) \in \mathcal{X}_{\mathbf{v}_\text{end}}\times \mathbb{R} \mid \exists \, \mathbf{x}\in \mathcal{P}_\mathbf{v} \text{ s.t. } x = \mathbf{x}_{\text{end}}, \: 
l \geq  \sum_{(u,v)\in \mathcal{E}_\mathbf{v}} c(\mathbf{x}_u,\mathbf{x}_v)\}.
\end{equation} 
\end{subequations}
We can now define the domination queries as set containment queries:
\begin{subequations}
\begin{equation}
\label{eq:reaches_cheaper_contain}
\Call{ReachesCheaper}{\mathbf{v}, S[\mathbf{v}_\text{end}]} = 
\left(\mathcal{C}_\mathbf{v}\nsubseteq \bigcup_{\mathbf{v}'\in S[\mathbf{v}_\text{end}]}\mathcal{C}_{\mathbf{v}'}\right),
\end{equation}
\begin{equation}
\Call{ReachesNew}{\mathbf{v}, S[\mathbf{v}_\text{end}]} = 
\left(\mathcal{S}_\mathbf{v}\nsubseteq \bigcup_{\mathbf{v}'\in S[\mathbf{v}_\text{end}]}\mathcal{S}_{\mathbf{v}'}\right).
\label{eq:reaches_new_contain}
\end{equation}
\end{subequations}
The union over convex sets is not generally itself convex, and we know of no efficient way to compute these queries. Instead, we check single domination, comparing to individual other paths.  In particular, we use

\vspace{-5pt}
\noindent
\begin{subequations}
\begin{minipage}{0.48\textwidth}
\begin{equation}\label{eq:reaches_cheaper_necessary}
    \nexists \mathbf{v}' \in S[\mathbf{v}_\text{end}] \text{ s.t. } \mathcal{C}_\mathbf{v} \subseteq \mathcal{C}_{\mathbf{v}'},
\end{equation}
\end{minipage}
\hfill
\begin{minipage}{0.48\textwidth}
\begin{equation}\label{eq:reaches_new_necessary}
    \nexists \mathbf{v}' \in S[\mathbf{v}_\text{end}] \text{ s.t. } \mathcal{S}_\mathbf{v} \subseteq \mathcal{S}_{\mathbf{v}'},
\end{equation}
\end{minipage}
\end{subequations}
\vspace{8pt}

\noindent as necessary conditions for $\textsc{ReachesCheaper}$ (\ref{eq:reaches_cheaper_contain}) and $\textsc{ReachesNew}$ (\ref{eq:reaches_new_contain}) respectively: if the candidate path is not dominated by the collective of other paths, then it certainly is not dominated by any individual other path.

\looseness=-1These single domination checks are still non-trivial because they involve comparing projections of convex sets, which are expensive to compute explicitly. In our examples, vertex sets are polytopic and costs are linear, which allows us to use Sadraddini and Tedrake's sufficient condition for containment of affine transformations of polyhedra \cite{sadraddiniLinearEncodingsPolytope2019}, evaluated by solving a convex program.  Using this condition to check (\ref{eq:reaches_new_necessary}) and (\ref{eq:reaches_cheaper_necessary}), $\Call{NotDominated}{\mathbf{v}, S[\mathbf{v}_\text{end}]}$ is conservative: If candidate path $\mathbf{v}$ is not dominated, it certainly returns \texttt{True}, but if candidate path $\mathbf{v}$ is dominated, it may return \texttt{True} or \texttt{False}.  In the case of more general convex vertex sets and costs, Jones and Morari's work on computing inner and outer polytopic approximations of convex sets \cite{jonesPolytopicApproximationExplicit2010} can be used before evaluating these containment queries.

\label{sec:domination_checks}

\subsection{Properties of the Algorithm}
\label{sec:algorithm_properties}
We state properties of the algorithm in this section, and include proofs in \cref{sec:appendix_proofs}.  In particular, under conservative \textsc{ReachesCheaper} or \textsc{ReachesNew} checks, GCS* is complete; and under conservative \textsc{ReachesCheaper} checks, GCS* is cost optimal:
\begin{theorem}\label{thm:gcs_star_completeness}
GCS* returns a path from $s$ to $t$ in finite iterations if one exists.
\end{theorem}

\begin{theorem}\label{thm:gcs_star_optimality}
GCS* is \textbf{cost optimal}: GCS* returns an optimal trajectory from $s$ to $t$ in finite iterations if one exists.
\end{theorem}

Note that, like A* \cite{russellArtificialIntelligenceModern2021}, GCS* as stated cannot be assured to terminate in finite iterations in the case where prog.\ (\ref{eq:spp_gcs}) is infeasible and the graph is infinite or cycles are permitted.
GCS* is made complete in the infeasible case by adding a limit on path length to prog.\ (\ref{eq:spp_gcs}) and 
placing a corresponding upper limit on the lengths of paths that get added to $S$ and $Q$ in \cref{line:add_to_S,line:add_to_Q}.

\subsubsection{Efficiency.} 
In contrast to A*, GCS* is not optimally efficient.  That is, given the heuristic information $\Tilde{h}$, it can be shown that GCS* expands subpaths that may not be expanded by some other hypothetical algorithm guaranteed to return the optimal solution.  Conceptually, GCS* loses this property that applies to A* because when GCS* expands a subpath $\mathbf{v}$, by considering paths through the children of $\mathbf{v}_\text{end}$, GCS* now has access to potentially improved lower-bound on the true cost-to-go, $\Tilde{h}(x)$, for $x$ in the reachable set of $\mathbf{v}$, $\mathcal{S}_\mathbf{v}$ (\ref{eq:reachable_set}), but we do not choose to use this information to update $\Tilde{h}(x)$.  A* with a consistent heuristic does not suffer from this inefficiency because it never expands a node twice \cite{dechterGeneralizedBestfirstSearch1985}, whereas GCS* may expand many subpaths whose terminal sets contain points in $\mathcal{S}_\mathbf{v}$.  

\subsubsection{Properties when using sampling-based domination checks.}
\label{sec:sampling_properties}
While we have proven that GCS* is cost-optimal and complete under conservative \textsc{IsDominated} checks, a sampling-based implementation is sometimes preferred, as discussed in \cref{sec:domination_checks}.
In the limit of infinite samples, the sampling-based implementations of \textsc{NotDominated} are exact. Thus it follows from \cref{thm:gcs_star_completeness,thm:gcs_star_optimality} that sampling-based GCS* is probabilistically complete and asymptotically cost-optimal.
One might raise concern that the backward-reachable set from the target in some set $\mathcal{X}_v$ may never be sampled, leading some important feasible path to be pruned. However, because all $\mathcal{P}_\mathbf{v}$ (\ref{eq:feasible_trajectories}) are compact, the sets being checked for containment, $\mathcal{S}_\mathbf{v}$ (\ref{eq:reachable_set}), are closed.  
Therefore, $\mathcal{S}_\mathbf{v}\setminus \bigcup_{\mathbf{v}'\in S[\mathbf{v}_\text{end}]}\mathcal{S}_{\mathbf{v}'}$ with respect to the topology of $\mathcal{S}_\mathbf{v}$ (the space being sampled) is open and thus either empty or positive-measure.  If it has positive measure, it will eventually be sampled.

\vspace{-10pt}
\section{Application to Planar Pushing}\label{sec:planar_pushing}

Tasks involving making and breaking contact while constraining non-penetration notoriously lead to an explosion of discrete modes, especially as the number of bodies scales.  Indeed, the naive formulation in \cref{sec:formulation} sees this combinatorial growth. With more careful construction of contact modes \cite{huangEfficientContactMode2021}, or by introducing hierarchy \cite{chengEnhancingDexterityRobotic2023}, one could greatly reduce search space for these problems. However, additional structure can make such methods less general. Our goal with these experiments is not to demonstrate state of the art performance on contact-rich manipulation tasks specifically, but instead to show that our method is able to approach such large problems.

\subsection{Formulation}\label{sec:formulation}
We implement a simplified planar pushing model with polyhedral robots and objects that can translate but not rotate, and static obstacles.  Robot-object and object-object contact are frictionless, and motion quasi-static, where bodies' velocities are proportional to the net forces applied to them.  We enforce non-penetration between all pairs of bodies, defining collision-free sets over robot and object positions.  We also define sets over positions and contact forces defining the permissible physical contact behavior for each pair of bodies.  Together, these two kinds of discrete modes define the sets of the graph $G$.  Each point $\mathbf{x}_v\in\mathcal{X}_v$ in prog.\ \ref{eq:spp_gcs} is comprised of $n_k$ knot points, defining the planar positions for each robot and object, actuation forces for each robot, and a force magnitude for each pair of bodies in contact. The constraints defining these sets, which we explain throughout this section, are straightforward to compose.  As such, the \textsc{Successors} operator can construct them on demand. In particular, the successors of some vertex are defined by changing the contact state or the non-penetration separating hyperplane (defined below) for a single pair of bodies.  

All constraints are linear, leading to polyhedral sets.  However, extensions of this model to include rotations could involve semidefinite relaxations, leading to spectrahedral sets, in a formulation similar to that of Graesdal et al.\ \cite{graesdalTightConvexRelaxations2024}.  This work could also be extended to borrow other components of the GCS formulation of planar pushing from Graesdal et al., like frictional contact.   We omit these features for simplicity.  However, the assumption that bodies are polyhedral is harder to relax, as it allows for an exact polyhedral decomposition of collision-free space. Alternative  settings that require only an approximate convex decomposition of the free configuration space can handle rotations and non-polyhedral bodies \cite{petersenGrowingConvexCollisionFree2023,wernerApproximatingRobotConfiguration2024,daiCertifiedPolyhedralDecompositions2023}.  However, these formulations are not conducive to making contact due to incomplete coverage.  As these decompositions are generally composed offline and cannot produce new sets as quickly as our \textsc{Successors} operator can, this formulation would either require building the entire graph before running GCS* or accepting slower evaluation of \textsc{Successors}.

\subsubsection{Non-penetration.}
We constrain that each pair of bodies do not penetrate one another by imposing that one face of one of the two polyhedra is a separating hyperplane: all vertices of one body lie on one side, and all vertices of the other body lie on the other side.

\subsubsection{Contact.}
Every pair of bodies may have any of the following kinds of contact: no contact, any face of one body in contact with any face of another, or any face of one body in contact with any vertex of another.  Gathering these options over all pairs of bodies produces a \textit{contact set}.  Each pair of bodies in contact in each contact set is accompanied by a force magnitude variable, leading to variation in dimension between sets.  This force acts normal to the face in contact. Position constraints are added to ensure the associated contact is physically feasible. Relative sliding of features in contact is allowed within a single contact set, as long as they remain in contact.
\subsubsection{Quasi-static dynamics.}
\looseness=-1 For each object and robot, we enforce that between  consecutive knot points, translation is proportional to the sum of forces on the body, including a robot's actuation force.  Edge constraints enforce that robot and object positions remain the same across edges -- for an edge $(u, v)$ in \cref{eq:spp_gcs}, the positions from the last knot point in $\mathbf{x}_u$ must equal the positions from the first knot point in $\mathbf{x}_v$.  As a result of this constraint, a path containing the edge $(u, v)$ is only feasible if there is a shared position that is feasible in both convex sets associated with these vertices, $\mathcal{X}_u$ and $\mathcal{X}_v$.

\subsection{Implementation Details}
\looseness=-1We use $n_k = 2$ knot points per set.   Our edge cost is the $L_1$ norm distance between knot points, plus a constant penalty for switching modes. That is, for $v\neq t$, $c(\mathbf{x}_u,\mathbf{x}_v) := 1 + \sum_i w_i||p^i_{v,1} - p^i_{v,0}||_1$, for weights $w_i$, where $p^i_{v,j}$ gives the position of the $i$th moving body (object or robot) at the $j$th knot point in vertex $v$.  $c(\mathbf{x}_u, t) = 0$ (no movement in the target set).  We use $w_i=1$ for all $i$.
Our sampling-based implementations can handle $L_2$ norm or $L_2$ norm squared edge costs easily. However, as discussed in \cref{sec:ah_containment}, our containment-based implementations need modification to accommodate non-linear costs.

\looseness=-1 For simplicity, our nominal $\tilde{h}$ is the cost of a ``shortcut'' edge between the point $x$ and some point in the target set $x_t \in \mathcal{X}_t$ chosen to minimize $\tilde{h}$.  If an edge does exist directly to the target, the true cost of that edge is used.  Otherwise, $\tilde{h}(x) \gets 1 + c(x,x_t)$, using a weight of $0.2$ for robot displacements instead of the true weights $w_i = 1$.  Because the weights do not exceed the true weights, this yields an admissible heuristic. Note that $1$ is added to $c(x,x_t)$ because we know that a constant penalty of $1$ is applied for every mode switch and there will be at least one additional mode switch before reaching the target.  For speed, we also conduct experiments with an $\epsilon$\textit{-suboptimal} heuristic instead, where we scale $\tilde{h}$ by some $\epsilon$, $\tilde{h}(x) \gets \epsilon \cdot (1+c(x,x_t))$, to prune more aggressively, using $\epsilon = 10$. The shortcut edge ignores contact dynamics, making this heuristic less informative in some scenarios than others. $\tilde{h}$ can be further optimized, but this is not the focus of this paper. Practitioners may apply domain knowledge to formulate informative heuristics, trading off between informativeness and computational cost.

For sampling-based domination checks, we use only a single sample per check. As we will discuss, empirically, this is sufficient.  In practice, due to edge constraints, the reachable set for a path $\mathbf{v}$, $\mathcal{S}_\mathbf{v}\subseteq \mathcal{X}_{\mathbf{v}_\text{end}}$, is often low-volume within the terminal set $\mathcal{X}_{\mathbf{v}_\text{end}}$.  As such, we sample uniformly in $\mathcal{X}_{\mathbf{v}_\text{end}}$ and then project onto $\mathcal{S}_\mathbf{v}$.

For containment-based methods, we first perform a cheap single-sample \textsc{NotDominated} check, and only check containment if it returns \texttt{False}.
Additionally, taking advantage of the unique structure of our planar pushing formulation, we only check the domination conditions on the last position knot point instead of all variables in the set. To reduce problem size for domination checks, we ``solve away'' equality constraints by parameterizing in their nullspace.

\vspace{-10pt}
\section{Results}
\label{sec:results}
\vspace{-5pt}
\begin{figure}[ht]
    \begin{subfigure}[t]{1.0\linewidth}
        \centering
        \includegraphics[trim=0 60 0 10, clip, width=\linewidth]{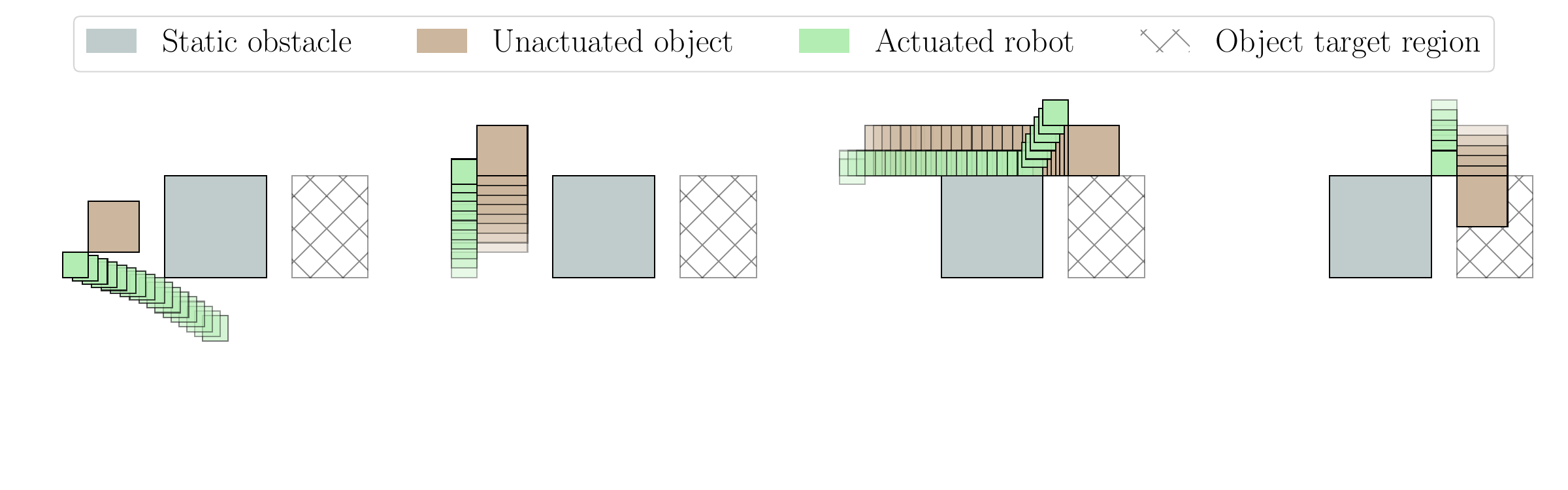}
        \begin{picture}(0,0)
            \put(-175,75){\small (a)}
          \end{picture}
    \end{subfigure}
    \begin{subfigure}[t]{1.0\linewidth}
        \vspace{-1em}
        \centering
        \includegraphics[trim=40 20 40 20, clip, width=\linewidth]{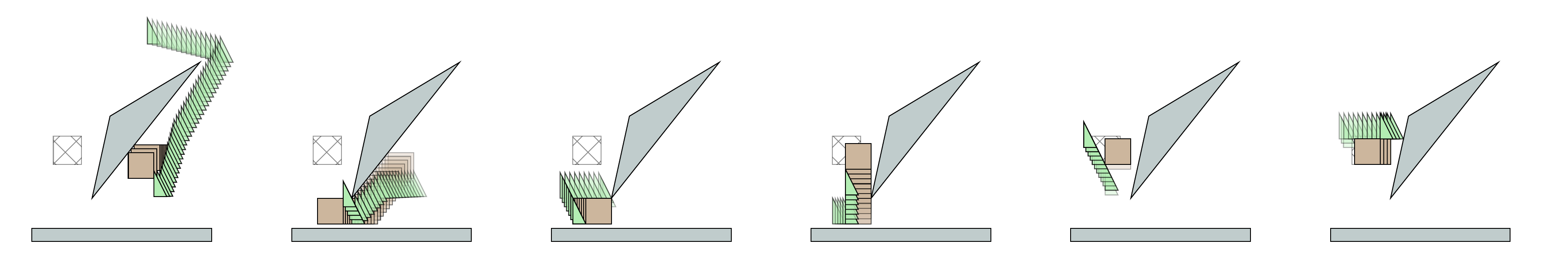}
        \begin{picture}(0,0)
            \put(-175,60){\small (b)}
          \end{picture}
    \end{subfigure}
    \caption{The $\epsilon$-suboptimal solution trajectories found by GCS* using the sampling-based \textsc{ReachesCheaper} domination check on the AROUND Task (a) and SQUEEZE Task (b). 
    In SQUEEZE, in the second frame, the robot pushes the object past the obstacle to make room for itself to maneuver around in the third frame.  In the fourth, the robot uses contact with the obstacle to slide the object vertically until the robot can fit underneath it.
    Videos of these trajectories are available on the 
    \href{https://shaoyuan.cc/research/gcs-star/}
    {project website}.}
    \label{fig:trajectories}
\end{figure}
\Cref{tab:task_results} presents results for three planar pushing tasks: AROUND, SQUEEZE, and STACK shown in \cref{fig:trajectories,fig:hero}.
AROUND is the simplest task: its graph has 194 vertices and 8,328 edges. SQUEEZE has a larger graph with 628 vertices and 56,806 edges. The dimensions of sets in AROUND and SQUEEZE range from 4 to 12. STACK (\cref{fig:hero}) has 5 bodies, leading to a combinatorial explosion of  approximately $1.3 \times 10^9$ sets and up to $8.5 \times 10^{17}$ edges. The dimensions of its sets range from 8 to 24. 
Due to the sizes of the graphs, the direct convex optimization approach \cite{marcucciShortestPathsGraphs2024} cannot be used, thus we compare GCS* against IxG and IxG* \cite{natarajanImplicitGraphSearch2024}, the state-of-the-art for incrementally solving SPP on GCS. 

\renewcommand{\arraystretch}{1.1}  
\begin{table}[htb]
	\centering
    \caption{Here we show the solve time, solution cost, and number of expanded paths (Exp) for three planar pushing tasks.
    Our algorithm GCS* is compared to the baselines IxG* and IxG (grey).
    Sampling (Sampl.) and containment (Cont.) based implementations (Impl.) of the domination checkers (DC) $\textsc{ReachesCheaper}$ (RC) and $\textsc{ReachesNew}$ (RN) are shown.
    A dagger ($\dagger$) indicates that the property holds probabilistically or asymptotically. Some experiments returned infeasible (\texttt{Fail}), were stopped after 10h ($\times$), or could not be run as the graph could not be explicitly constructed (-).
    The baselines required preprocessing times of 1.1m and 1.7h for AROUND and SQUEEZE respectively.
    Our methods require no preprocessing time.
    All computation was performed on an AMD Ryzen 9 7950x 16-core 32-thread processor.
    \vspace{.2cm}
    }
    \fontsize{7pt}{8pt}\selectfont
    \begin{tabular}{|c|c|c|c|c|c|c|c|c||c|c|c||c|c|c|}
        \hline
        \multirow{2}{*}{Alg.} & \multirow{2}{*}{DC} & \multirow{2}{*}{Impl.} & \multirow{2}{*}{Heur.} 
        & \multirow{2}{*}{\shortstack{Com-\\plete}} & \multirow{2}{*}{\shortstack{Cost\\optimal}} 
        & \multicolumn{3}{c||}{AROUND} & \multicolumn{3}{c||}{SQUEEZE} & \multicolumn{3}{c|}{STACK} \\ 
        \cline{7-15}
        & & & & & & Time & Cost & Exp & Time & Cost & Exp & Time & Cost & Exp \\
        \hline
        GCS* & RC & Sampl. & Admiss. &
        Yes$^\dagger$ & Opt.$^\dagger$
        & 26.2s &  27.5 & 432 & 4m & 47.80 & 2114 & $\times$ & $\times$ & $\times$ \\
        GCS* & RC & Cont. & Admiss. &
        Yes & Opt.
        & 7.5h & 27.5 & 734  & $\times$ & $\times$ & $\times$ & $\times$ & $\times$ & $\times$\\
        \color{gray} IxG* & \color{gray} N.A.& \color{gray} N.A. & \color{gray} Admiss.
        & \color{gray} Yes & \color{gray} Opt.
        & \color{gray} $\times$ & \color{gray} $\times$ & \color{gray} $\times$ & \color{gray} $\times$ & \color{gray} $\times$ & \color{gray} $\times$ & \color{gray} - & \color{gray} - & \color{gray}- \\
        \hline
        GCS* & RC & Sampl. & Inadm. &
        Yes$^\dagger$ & $\epsilon$-subopt.$^\dagger$
        & 2.9s & 27.5 & 51 & 38.3s & 54.55 & 381 & 21.9s & 67.53 & 57 \\
        GCS* & RC & Cont. & Inadm. &
        Yes & $\epsilon$-subopt.
        &  4.6m & 27.5 & 73 & $\times$ & $\times$ & $\times$ & $\times$ & $\times$ & $\times$ \\
        \color{gray} IxG* & \color{gray} N.A.& \color{gray} N.A. & \color{gray} Inadm. & 
        Yes & \color{gray} $\epsilon$-subopt.
        & \color{gray} $\times$ & \color{gray} $\times$ & \color{gray} $\times$ & \color{gray} $\times$ & \color{gray} $\times$ & \color{gray} $\times$ & \color{gray} - & \color{gray} - & \color{gray}-  \\
        \hline
        GCS* & RN & Sampl. & Inadm. &
        Yes$^\dagger$ & Subopt.
        & 2.5s&  27.5 & 44 & 20s & 57.42 & 259 & 21.8s& 67.53 & 57 \\
        GCS* & RN & Cont. & Inadm. &
        Yes & Subopt.
        & 35.4s &  28.5 & 75 & 20.5m & 57.42 & 458 & 3.5m & 67.53 & 64 \\
        \color{gray} IxG & \color{gray} N.A.& \color{gray} N.A.  & \color{gray} Inadm. & 
        \color{gray} No & \color{gray} Subopt.
        & \color{gray} 4.7s & \color{gray} 28 & \color{gray} 31 & \color{gray} 2.3m & \color{gray} \texttt{Fail} & \color{gray} 615 & \color{gray} - & \color{gray} - & \color{gray} - \\
        \hline
    \end{tabular}
    \label{tab:task_results}
    \vspace{-.1cm}
\end{table}

For sampling-based domination checks, as we increase the number of samples per check, we approach completeness and optimality. 
Due to the curse of dimensionality,  such coverage demands exponentially more samples as set dimensions increase. However, in our tasks, we found that very sparse sampling -- in particular, a single sample per domination check -- still resulted in good solution quality. This is an interesting result that enables an efficient and de facto nearly optimal implementation.
We understand this to be a property particular to our problem structure, in which the condition that two paths reach the same final knot point within the same vertex is quite restrictive.  For example, the sampling-based \textsc{ReachesNew} approximation asks whether any path in $S$ can achieve a trajectory ending at a particular sampled object position and robot position in some contact state that can be achieved by a candidate path, calling the candidate path dominated if so.
In this event, it is likely that that path in $S$ can reach much of what the candidate path can, and the candidate path can be pruned with little consequence.

Containment-based domination checks scale poorly with path length, or, more precisely, the dimension of the full trajectory through it.  This is reflected in the solve times in the rows marked ``Cont.'' in \cref{tab:task_results}.

\looseness=-1The number of paths expanded is impacted greatly by the strength of the heuristic in the context of the particular task. In general, the ``shortcut'' edge heuristic is strong for AROUND and STACK and weaker for SQUEEZE which requires squeezing through a narrow gap then maneuvering around the object to push it from a different direction multiple times.
As expected, use of an inadmissible heuristic leads to cost increase. 
Typically, containment-based checks lead to more paths being expanded as compared to sampling-based checks (as seen in \cref{tab:task_results}) because of false negatives from sampling, as well as conservatism of the single domination criteria. However, containment-based checks could also lead to fewer paths being expanded because important paths are not wrongly pruned, as they might be when using sampling-based checks (not observed in \cref{tab:task_results}).

In comparison to our methods, the baselines require significant time to compute heuristic values for the entire graph before starting their query phase. While this approach can yield a stronger heuristic (as in the bottommost row, where IxG expands fewer paths than GCS* for AROUND), for these tasks the solve times can be dominated by this pre-processing phase. When the graph is very large, this pre-processing can become intractable (as in STACK). 
Because IxG* prunes paths based on a global upper bound found by IxG, in cases where IxG is unable to return a solution (as in SQUEEZE), or simply when an admissible heuristic is used (as in row 3), no paths will be pruned, resulting in intractably large search spaces.
As GCS* is complete, and our domination checks do not rely on a global upper bound, GCS* does not experience these particular limitations.

\vspace{-10pt}
\section{Conclusion}
\label{sec:conclusion}
\vspace{-5pt}
We propose GCS*, a forward heuristic search algorithm for solving large discrete-continuous planning problems formulated as GCS. 
We define two domination checks \textsc{ReachesNew} and \textsc{ReachesCheaper}, as well as containment and \linebreak sampling-based implementations of those checks that allow GCS* to be complete and optimal, or have probabilistic/asymptotic versions of those properties, respectively.
GCS* provides a principled way of addressing the challenges of applying graph search to the discrete-continuous setting.
We demonstrate settings in which GCS* performs favourably compared to the state-of-the-art.
For application to real-world planar pushing tasks, further work would be required to handle rotations \cite{graesdalTightConvexRelaxations2024}. Incorporating these insights into algorithms that leverage hierarchy, factorization or learned heuristics could solve more complex problems faster.
\vspace{-5pt}

\begin{credits}
\subsubsection{\ackname}
This work was supported by the Aker Scholarship; Amazon.com, PO No. 2D-12585006; The AI Institute, award ID Agmd Dtd 8/1/2023; ARO grant W911NF-23-1-0034; and The Charles Stark Draper Laboratory, Inc., where Rebecca H. Jiang is a Draper Scholar.
\vspace{-5pt}
\subsubsection{\discintname}
The authors have no competing interests to declare that are
relevant to the content of this article.
\end{credits}
%
%
%
\bibliographystyle{splncs04}
\bibliography{references}
\appendix
\section{Proofs of Algorithmic properties}\label{sec:appendix_proofs}
\setcounter{theorem}{0}
We prove that, under conservative \textsc{NotDominated} or specifically \textsc{ ReachesCheaper } checks, GCS* is complete or optimal, respectively: The proofs rely on these checks  being necessary but not sufficient conditions for \cref{eqn:reaches_new} and \cref{eqn:reaches_cheaper}.  Optimality also relies on $\tilde{h}$ being pointwise admissible (\cref{def:pointwise_admissible}).
\subsubsection{Completeness.}\label{sec:completeness}
We show that if prog.\ (\ref{eq:spp_gcs}) is feasible, GCS* returns a feasible path in finite iterations.  However, like A* \cite{russellArtificialIntelligenceModern2021}, GCS* as stated cannot be assured to terminate in finite iterations in the case where prog.\ (\ref{eq:spp_gcs}) is infeasible and the graph is infinite or cycles are permitted.
GCS* is made complete in the infeasible case by adding a limit on path length to prog.\ (\ref{eq:spp_gcs}) and 
placing an upper limit on the lengths of paths that get added to $S$ and $Q$ in \cref{line:add_to_S,line:add_to_Q}.

\vspace{-3pt}
\begin{lemma}\label{lemma:finite_expansion}
    For any path $\mathbf{v}$ in $Q$, GCS* either expands $\mathbf{v}$ after a finite number of iterations or returns a feasible path to the target first.
\end{lemma}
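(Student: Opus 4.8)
The plan is to show that only finitely many paths can be popped from $Q$ before $\mathbf{v}$ is, leveraging the two structural assumptions from the setup: edge costs are bounded away from zero, and every vertex has finitely many successors. Let $\delta > 0$ be a uniform lower bound on edge costs. Since $\tilde{g}(\mathbf{w}, x)$ is a sum of $|\mathcal{E}_\mathbf{w}|$ edge costs and $\tilde{h} \ge 0$, I would first record that for any path $\mathbf{w}$,
\[
\tilde{f}(\mathbf{w}) = \tilde{g}(\mathbf{w}, \mathbf{x}^*(\mathbf{w})_\text{end}) + \tilde{h}(\mathbf{x}^*(\mathbf{w})_\text{end}) \ge \tilde{g}(\mathbf{w}, \mathbf{x}^*(\mathbf{w})_\text{end}) \ge \delta\,|\mathcal{E}_\mathbf{w}|.
\]
Hence, setting $F := \tilde{f}(\mathbf{v})$ and assuming for the moment that $F < \infty$, every path $\mathbf{w}$ with $\tilde{f}(\mathbf{w}) \le F$ traverses at most $\lfloor F/\delta \rfloor$ edges.

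The second step bounds the number of such short paths. Because $s$ has finitely many successors and so does every subsequent vertex, a straightforward induction on the number of edges shows that the set of paths starting at $s$ with at most $\lfloor F/\delta \rfloor$ edges is finite; call this finite set $W$. By the bound above, every path whose priority is at most $F$ lies in $W$.

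Next I would couple this with the priority-queue discipline. Each distinct path sequence has a unique prefix, so it is generated, and thus added to $Q$, at most once; being removed from $Q$ only by a pop, it is expanded at most once. While $\mathbf{v}$ remains in $Q$ with priority $F$, every popped path is a current minimum of $Q$ and therefore has priority at most $F$, placing it in $W$. Since distinct pops are distinct elements of the finite set $W$, at most $|W|$ iterations can elapse before $\mathbf{v}$ is itself popped. During those iterations either $\mathbf{v}$ is expanded, or some popped path terminates at $t$ and GCS* returns it first; in either case the claimed disjunction holds.

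The remaining obstacle is the degenerate case $F = \infty$, which can occur when a conservatively approximated \textsc{NotDominated} check admits an infeasible partial path (for which $\tilde{g} \equiv \infty$). Such a path sits at the back of the queue and is popped only after every finite-priority path. In the feasible case a target-reaching path has finite priority and is returned before the infinite-priority path is ever reached, so the disjunction is satisfied; in the infeasible case, the path-length-limit proviso caps all path lengths, rendering the entire collection of admissible paths finite so that the bounded-iteration argument applies verbatim. I expect this bookkeeping around infeasible, infinite-priority paths, rather than the core counting argument, to be the subtle part of the proof.
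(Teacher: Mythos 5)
Your proposal is correct and follows essentially the same route as the paper: edge costs bounded away from zero, together with finite branching, imply that only finitely many paths can ever enter $Q$ with priority at most $\tilde{f}(\mathbf{v})$, and the best-first popping discipline then forces $\mathbf{v}$ to be expanded within that many iterations unless a target-reaching path is returned first. The only difference is that the paper simply asserts $\tilde{f}(\mathbf{v})$ is finite and stops there, whereas you also treat the $\tilde{f}(\mathbf{v}) = \infty$ case; that addition is fine, but note that your feasible sub-case quietly invokes completeness, which downstream depends on this lemma, so that bookkeeping is non-circular only because completeness needs the lemma solely for finite-priority paths --- a dependency worth stating explicitly.
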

\begin{proof}
    Path $\mathbf{v}$ has some finite estimated total cost $\tilde{f}(\mathbf{v})$.  Because all edge costs are positive and bounded away from zero, there exist a finite number of paths $\mathbf{v}'$ with cost-to-come that do not exceed $\tilde{f}(\mathbf{v})$, $\tilde{g}(\mathbf{v}')\leq \tilde{f}(\mathbf{v})$.  As $\tilde{f}(\mathbf{v}')\geq \tilde{g}(\mathbf{v}')$ for any $\mathbf{v}'$, there are a finite number of paths $\mathbf{v}'$ ever added to $Q$ such that $\tilde{f}(\mathbf{v}')\leq \tilde{f}(\mathbf{v})$.  As GCS* always expands the path in $Q$ with the lowest $\tilde{f}$ value, GCS* will eventually exhaust all such paths and expand $\mathbf{v}$, unless it terminates with a feasible path to the target first.
\end{proof}

\vspace{-10pt}
\begin{lemma}
\label{lemma:reachable_visited}
If target $t$ is reachable from source $s$, then some path that reaches it is added to $Q$ in finite iterations.
\end{lemma}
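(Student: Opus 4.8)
The plan is to argue by induction along a fixed feasible path, but to track a particular feasible \emph{point} rather than any particular partial path; this is what makes the argument robust to pruning. Suppose $t$ is reachable from $s$, and fix a feasible path $\mathbf{p} = [p_0, \dots, p_k]$ with $p_0 = s$ and $p_k = t$, together with a feasible trajectory $\mathbf{y} = [y_0, \dots, y_k]$ through it (so $y_j \in \mathcal{X}_{p_j}$ and every edge constraint $(y_j, y_{j+1}) \in \mathcal{X}_{(p_j, p_{j+1})}$ holds). I would prove by induction on $j$ the claim: in finitely many iterations GCS* either first returns a feasible path to $t$, or adds to $Q$ some path $\mathbf{w}$ ending at $p_j$ with $y_j \in \mathcal{S}_\mathbf{w}$. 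Instantiating this at $j = k$ yields a path ending at $p_k = t$ that reaches $y_k$, i.e.\ a feasible path to $t$ that has been added to $Q$; and in the early-return branch the returned path is, by \cref{lemma:finite_expansion}, itself a feasible path to $t$ that was necessarily added to $Q$. Either way the lemma follows.

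The base case is immediate, since $[s]$ is placed in $Q$ at initialization and $y_0 \in \mathcal{X}_s$, which is exactly the reachable set $\mathcal{S}_{[s]}$. For the inductive step I would take the path $\mathbf{w}$ ending at $p_j$ with $y_j \in \mathcal{S}_\mathbf{w}$ supplied by the hypothesis. By \cref{lemma:finite_expansion}, either GCS* returns a feasible path to $t$ first (and we are done), or $\mathbf{w}$ is expanded in finitely many iterations. On expansion, GCS* enumerates the successors of $p_j$, which include $p_{j+1}$, and forms the candidate $\mathbf{w}' = [\mathbf{w}, p_{j+1}]$. Since $\mathbf{w}$ admits a trajectory ending at $y_j$, and $(y_j, y_{j+1})$ satisfies the edge constraint $\mathcal{X}_{(p_j, p_{j+1})}$ with $y_{j+1} \in \mathcal{X}_{p_{j+1}}$ (both path-independent facts inherited from feasibility of $\mathbf{y}$), concatenation gives a feasible trajectory through $\mathbf{w}'$ ending at $y_{j+1}$, so $y_{j+1} \in \mathcal{S}_{\mathbf{w}'}$.

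The crux is the domination check on $\mathbf{w}'$. If $\textsc{NotDominated}(\mathbf{w}', S[p_{j+1}])$ returns \texttt{True}, then $\mathbf{w}'$ is added to $S[p_{j+1}]$ and $Q$, and the claim holds at $j+1$ with witness $\mathbf{w}'$. If it returns \texttt{False}, conservativeness forces $\mathbf{w}'$ to be genuinely dominated in the exact sense of whichever check is in use. Here I would invoke that cost-domination implies reachability coverage: if $\mathcal{C}_{\mathbf{w}'} \subseteq \bigcup_{\mathbf{v}' \in S[p_{j+1}]} \mathcal{C}_{\mathbf{v}'}$, projecting onto the point coordinate gives $\mathcal{S}_{\mathbf{w}'} \subseteq \bigcup_{\mathbf{v}' \in S[p_{j+1}]} \mathcal{S}_{\mathbf{v}'}$, so that under either \textsc{ReachesCheaper} or \textsc{ReachesNew} one obtains $\mathcal{S}_{\mathbf{w}'} \subseteq \bigcup_{\mathbf{v}' \in S[p_{j+1}]} \mathcal{S}_{\mathbf{v}'}$. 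Because $y_{j+1} \in \mathcal{S}_{\mathbf{w}'}$, some $\mathbf{v}' \in S[p_{j+1}]$ satisfies $y_{j+1} \in \mathcal{S}_{\mathbf{v}'}$; as paths are only ever added to (never removed from) $S$, and a path enters $S$ in the same step it enters $Q$, this $\mathbf{v}'$ was added to $Q$ at an earlier finite iteration, giving the claim at $j+1$ with witness $\mathbf{v}'$.

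I expect the main obstacle to be exactly this pruning case: the natural candidate continuing the feasible path can be discarded, so the induction cannot follow a fixed path and must instead follow the reachable point $y_{j+1}$, leaning on both the conservativeness of \textsc{NotDominated} and the implication cost-domination $\Rightarrow$ reachability-coverage to guarantee a surviving witness in $S$. The remaining bookkeeping is routine: each inductive step costs finitely many iterations via \cref{lemma:finite_expansion}, and $k$ is finite, so a path reaching $t$ is added to $Q$ after finitely many iterations.
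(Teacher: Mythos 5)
Your proof is correct and takes essentially the same approach as the paper's: both fix a feasible path together with a feasible trajectory, track the trajectory points vertex by vertex, and use conservativeness of \textsc{NotDominated} to guarantee that, for each index, some path terminating at that vertex and reaching that point is added to $Q$ within finitely many iterations (invoking \cref{lemma:finite_expansion} to get each such path expanded). The only difference is organizational: the paper argues by contradiction with a \emph{maximal} index $j$, so that by maximality nothing in $S[\mathbf{v}_{j+1}]$ reaches $\mathbf{x}_{j+1}$ and the extended path must pass the check, whereas you run the induction forward and resolve the pruning case by switching the witness to a dominating path already in $S$ --- using the correct observation that exact cost-domination of $\mathbf{w}'$ implies its reached point $y_{j+1}$ is also reached by some path in $S[p_{j+1}]$, which is the contrapositive of the fact the paper uses.
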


\begin{proof}
Suppose, for contradiction, that $t$ is reachable from $s$, but no path that reaches it is added to $Q$ in finite iterations.  There is some feasible path $\mathbf{v}$ where $\mathbf{v}_\text{start} = s$, $\mathbf{v}_\text{end} = t$, and some feasible trajectory $\mathbf{x}\in \mathcal{P}_\mathbf{v}$.  Let $j$ be the largest index such that there exists a path $\mathbf{v}'$ (which may or may not be a subpath of $\mathbf{v}$) terminating in $\mathbf{v}_j$ ($\mathbf{v}'_\text{end} = \mathbf{v}_j$) and reaching $\mathbf{x}_{j}$, that ever gets added to $Q$.  Such a $j\geq 0$ exists because $Q$ is initialized with the path containing just $s$. 
By \cref{lemma:finite_expansion}, $\mathbf{v}'$ is expanded in finite iterations, unless a path reaching $t$ is returned first.  If a path reaching $t$ is returned first, we achieve contradiction trivially.  Consider instead the case where $\mathbf{v}'$ is expanded in finite iterations.
By assumption, $\mathbf{v}_{j+1}$ is a successor of $\mathbf{v}_{j}$, and so  the extended path $[\mathbf{v}', \mathbf{v}_{j+1}]$ must be added to $Q$ when $\mathbf{v}'$ is expanded.  We can be sure  it is added to $Q$ instead of being rejected by \textsc{NotDominated} in line 10 because
 $(\mathbf{x}_{j}, \mathbf{x}_{j+1})\in \mathcal{X}_{(\mathbf{v}_{j}, \mathbf{v}_{j+1})}$, making $\mathbf{x}_{j+1}$ reachable via the extended path $[\mathbf{v}', \mathbf{v}_{j+1}]$, and, by assumption, no path terminating in $\mathbf{v}_{j+1}$ and reaching $\mathbf{x}_{j+1}$ has been added to $Q$ prior, and thus nor has it been added to $S[\mathbf{v}_{j+1}]$, as $S$ is only added to in conjunction with $Q$.  However, the extended path $[\mathbf{v}', \mathbf{v}_{j+1}]$ is now a path in $Q$ terminating in $\mathbf{v}_{j+1}$ and reaching $\mathbf{x}_{j+1}$, creating a contradiction with the definition of $j$. 
\end{proof}

\begin{theorem}\label{thm:gcs_star_completeness}
GCS* returns a path from $s$ to $t$ in finite iterations if one exists.
\end{theorem}
\begin{proof}
By \cref{lemma:reachable_visited}, a feasible path from $s$ to $t$ is added to $Q$ in finite iterations if one exists.  By  \cref{lemma:finite_expansion}, either this path is expanded in finite iterations (in which case it is immediately returned), or another such feasible path from $s$ to $t$ is returned first.  In both cases, a path from $s$ to $t$ is returned in finite iterations.
\end{proof}

\subsubsection{Cost-optimality.}\label{sec:optimality}
In this section, we assume \textsc{NotDominated} is an exact or conservative check for \textsc{ReachesCheaper}.

\begin{lemma}
\label{lemma:Q_has_node_with_optimal_cost_to_come}
Suppose prog.\ (\ref{eq:spp_gcs}) is feasible and $\mathbf{v}$ is an optimal path from $s$ to $t$.  If $t$ is unexpanded, there is a path $\mathbf{v}'$ in $Q$ with $\mathbf{v}'_{\text{end}} = \mathbf{v}_i$ for some $i$, such that $\mathbf{v}'$ attains the optimal cost-to-come to $\mathbf{x}^*(\mathbf{v})_{i}$, $\tilde{g}(\mathbf{v}', \mathbf{x}^*(\mathbf{v})_{i}) = g^*(\mathbf{x}^*(\mathbf{v})_{i})$.
\end{lemma}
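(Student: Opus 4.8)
The plan is to lift the classical A* progress invariant---that before termination the frontier always contains a node lying on an optimal path and reached at its optimal cost-to-come---from vertices to \emph{points}, since it is at the point level that optimal substructure survives in a GCS. Write the optimal path as $\mathbf{v} = (\mathbf{v}_0, \dots, \mathbf{v}_n)$ with $\mathbf{v}_0 = s$ and $\mathbf{v}_n = t$, and abbreviate its optimal trajectory $\mathbf{x}^* := \mathbf{x}^*(\mathbf{v})$. I will call a path $\mathbf{w}$ a \emph{witness at index $i$} if $\mathbf{w}_\text{end} = \mathbf{v}_i$, $\mathbf{w}$ has been added to $Q$ at some iteration, and $\tilde{g}(\mathbf{w}, \mathbf{x}^*_i) = g^*(\mathbf{x}^*_i)$. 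The lemma is then exactly the claim that, whenever $t$ is unexpanded, some witness is still present (unexpanded) in $Q$.

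The key enabling fact, which I would establish first, is the trajectory-level optimal substructure property: for every $i$ the prefix $(\mathbf{v}_0, \dots, \mathbf{v}_i)$ reaches $\mathbf{x}^*_i$ at the globally optimal cost-to-come, so that $g^*(\mathbf{x}^*_i) = \sum_{l=0}^{i-1} c(\mathbf{x}^*_l, \mathbf{x}^*_{l+1})$ and, in particular, $g^*(\mathbf{x}^*_{i+1}) = g^*(\mathbf{x}^*_i) + c(\mathbf{x}^*_i, \mathbf{x}^*_{i+1})$. This follows from the pointwise decomposition $f^*(\mathbf{v}_i) = g^*(\mathbf{x}^*_i) + h^*(\mathbf{x}^*_i)$: since $\mathbf{v}$ is globally optimal and passes through $\mathbf{v}_i$, we have $f^*(\mathbf{v}_i) = \sum_{l=0}^{n-1} c(\mathbf{x}^*_l, \mathbf{x}^*_{l+1})$; the prefix is a feasible way to reach $\mathbf{x}^*_i$ (so its cost is $\ge g^*(\mathbf{x}^*_i)$) and the suffix is a feasible way to reach $t$ from $\mathbf{x}^*_i$ (so its cost is $\ge h^*(\mathbf{x}^*_i)$), and because their sum already equals $f^*(\mathbf{v}_i) = g^*(\mathbf{x}^*_i) + h^*(\mathbf{x}^*_i)$, both inequalities are tight. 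Equivalently, were there a strictly cheaper way to reach the point $\mathbf{x}^*_i$, splicing it onto the optimal suffix would yield a cheaper feasible path to $t$---valid precisely because walks may revisit vertices and the splice occurs at the exact same point $\mathbf{x}^*_i$---contradicting optimality of $\mathbf{v}$.

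With this in hand I would run an extremal argument over indices. A witness at index $0$ exists, since the initial path $[s]$ is added to $Q$ and reaches $\mathbf{x}^*_0$ at cost $0 = g^*(\mathbf{x}^*_0)$; indices are bounded by $n$, so let $j$ be the largest index admitting a witness. Suppose, for contradiction, that every witness at index $j$ has been expanded. If $j = n$ then a path ending at $t$ has been expanded, contradicting that $t$ is unexpanded; hence $j < n$. Pick an expanded witness $\mathbf{w}$ at index $j$. When $\mathbf{w}$ was expanded, GCS* formed the extension $[\mathbf{w}, \mathbf{v}_{j+1}]$, as $\mathbf{v}_{j+1}$ is a successor of $\mathbf{v}_j$. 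Using the transition $(\mathbf{x}^*_j, \mathbf{x}^*_{j+1})$, which is feasible and costs $c(\mathbf{x}^*_j, \mathbf{x}^*_{j+1})$, this extension reaches $\mathbf{x}^*_{j+1}$ at cost $g^*(\mathbf{x}^*_j) + c(\mathbf{x}^*_j, \mathbf{x}^*_{j+1}) = g^*(\mathbf{x}^*_{j+1})$ by the sublemma, so $\tilde{g}([\mathbf{w}, \mathbf{v}_{j+1}], \mathbf{x}^*_{j+1}) = g^*(\mathbf{x}^*_{j+1})$; that is, $[\mathbf{w}, \mathbf{v}_{j+1}]$ would itself be a witness at index $j+1$ if added.

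The remaining step, which I expect to be the crux, is to handle the conservative \textsc{ReachesCheaper} check at \cref{line:domination_check}. If the check admits $[\mathbf{w}, \mathbf{v}_{j+1}]$, it is added to $Q$ and is a witness at index $j+1$. If instead it is rejected, then by conservativeness $[\mathbf{w}, \mathbf{v}_{j+1}]$ is genuinely cost-dominated, so negating \cref{eqn:reaches_cheaper} at the point $\mathbf{x}^*_{j+1}$ yields some $\mathbf{v}'' \in S[\mathbf{v}_{j+1}]$ with $\tilde{g}(\mathbf{v}'', \mathbf{x}^*_{j+1}) \le \tilde{g}([\mathbf{w}, \mathbf{v}_{j+1}], \mathbf{x}^*_{j+1}) = g^*(\mathbf{x}^*_{j+1})$, hence equal to $g^*(\mathbf{x}^*_{j+1})$ by optimality of $g^*$; and since every member of $S$ was added to $Q$ alongside it, $\mathbf{v}''$ is itself a witness at index $j+1$. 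Either way a witness at index $j+1$ exists, contradicting the maximality of $j$. Therefore not every witness at index $j$ is expanded, so one remains in $Q$, which proves the lemma. The subtle points to get right are that $S[\mathbf{v}_{j+1}]$ and $Q$ are populated in lockstep (so any dominating path is always itself in $Q$), and that only the \emph{necessary}/conservative direction of the check is invoked, exactly as flagged at the start of \cref{sec:optimality}.
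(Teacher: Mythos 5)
Your proof is correct and takes essentially the same route as the paper's: an extremal-index argument along the optimal trajectory, extending a maximal expanded path by one vertex and using conservativeness of \textsc{ReachesCheaper} (together with the fact that $S$ and $Q$ are populated in lockstep) to exhibit an unexpanded path in $Q$ attaining the optimal cost-to-come. The only differences are bookkeeping: you take the maximum over paths ever added to $Q$ rather than over expanded paths (yielding a proof by contradiction where the paper argues directly), and you explicitly prove the trajectory-level optimal-substructure identity $g^*(\mathbf{x}^*_{i+1}) = g^*(\mathbf{x}^*_i) + c(\mathbf{x}^*_i, \mathbf{x}^*_{i+1})$ via splicing, which the paper asserts without proof.
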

\begin{proof}
    If GCS* has not completed one iteration, the lemma holds trivially via the path containing only $s$, $\mathbf{v}^s:= [s]\in Q$.   
    Suppose instead $\mathbf{v}^s$ has been expanded.  
    Let $\mathbf{v}$ be any optimal path from $s$ to $t$, and let $\mathbf{x}:=\mathbf{x}^*(\mathbf{v})$ be an optimal trajectory through $\mathbf{v}$.
    Let $i$ be the 
    largest index for which there exists an expanded path $\mathbf{v}'$ such that $\mathbf{v}_i = \mathbf{v}'_{\text{end}}$, with optimal cost-to-come to $\mathbf{x}_i$, $\tilde{g}(\mathbf{v}', \mathbf{x}_i) = g^*(\mathbf{x}_i)$.
    That is, the $i$th vertex on $\mathbf{v}$ has been expanded on some path 
    $\mathbf{v}'$
    which may or may not be a subpath of $\mathbf{v}$ but attains the same (optimal) cost-to-come to $\mathbf{x}_i$.  Since $s$ has been expanded, and $t$ has not, $i\in [0, \text{end}-1]$.  
    Because $(\mathbf{x}_i, \mathbf{x}_{i+1})\in \mathcal{X}_{(\mathbf{v}_i, \mathbf{v}_{i+1})}$ (we know this because the optimal trajectory $\mathbf{x}$ traverses this edge via these points), 
   the extended path $[\mathbf{v}', \mathbf{v}_{i+1}]$
    reaches $\mathbf{x}_{i+1}$ with optimal cost-to-come $g^*(\mathbf{x}_{i+1}) = g^*(\mathbf{x}_{i}) + c(\mathbf{x}_i, \mathbf{x}_{i+1})$.
    
    We now argue that once $\mathbf{v}'$ was expanded, there must have been, at some point present or prior, a path added to $Q$ terminating at $\mathbf{v}_{i+1}$ and reaching $\mathbf{x}_{i+1}$ with optimal cost-to-come. Then, we argue that it is still in $Q$.
    When $\mathbf{v}'$ was expanded, the extended path $[\mathbf{v}', \mathbf{v}_{i+1}]$ was placed on $Q$ unless it was rejected by \textsc{ReachesCheaper} returning \texttt{False} in line 10.  
    If \textsc{ReachesCheaper} returned \texttt{False}, then another path was already in $S[\mathbf{v}_{i+1}]$ that terminated at $\mathbf{v}_{i+1}$ and reached $\mathbf{x}_{i+1}$ with optimal cost-to-come.  Paths are added to $S$ only when added to $Q$, so this path was added to $Q$.
    By the definition of $i$, whichever such path was at any point in $Q$ has not been expanded and is still in $Q$. 
\end{proof}

\begin{corollary}
\label{corollary:unexpanded_t_Q_has_optimal_node}
Suppose prog.\ (\ref{eq:spp_gcs}) is feasible, and $\mathbf{v}$ is an optimal path from $s$ to $t$. If \cref{alg:gcs_star} has not terminated, there is some $\mathbf{v}'$ in $Q$ such that $\mathbf{v}'_\text{end}\in \mathbf{v}$ and the estimated total cost for $\mathbf{v}'$ does not exceed the true optimal cost, $\tilde{f}(\mathbf{v}')\leq f^*(s)$.
\end{corollary}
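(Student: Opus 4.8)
The plan is to read the desired bound off \cref{lemma:Q_has_node_with_optimal_cost_to_come} and then convert its cost-to-come equality into a bound on the total cost estimate $\tilde{f}$, using pointwise admissibility of $\tilde{h}$ together with the identity $f^*(v) = g^*(x) + h^*(x)$ for points $x$ on an optimal path through $v$. First I would note that the hypothesis that \cref{alg:gcs_star} has not terminated implies that $t$ is unexpanded: any path terminating at $t$ is returned the moment it is popped, so if nothing has been returned, no path terminating at $t$ has been expanded. This permits invoking \cref{lemma:Q_has_node_with_optimal_cost_to_come} to obtain a path $\mathbf{v}' \in Q$ with $\mathbf{v}'_\text{end} = \mathbf{v}_i$ for some index $i$ and with $\tilde{g}(\mathbf{v}', \mathbf{x}^*(\mathbf{v})_i) = g^*(\mathbf{x}^*(\mathbf{v})_i)$. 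Writing $x_i := \mathbf{x}^*(\mathbf{v})_i \in \mathcal{X}_{\mathbf{v}_i}$, the inclusion $\mathbf{v}'_\text{end} = \mathbf{v}_i \in \mathbf{v}$ already establishes the first claimed property.

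Next I would bound $\tilde{f}(\mathbf{v}')$ above by evaluating the total-cost estimate at the specific point $x_i$. Since $\tilde{f}(\mathbf{v}') = \tilde{g}(\mathbf{v}', \mathbf{x}^*(\mathbf{v}')_\text{end}) + \tilde{h}(\mathbf{x}^*(\mathbf{v}')_\text{end})$ is by definition the minimum of $\tilde{f}(\mathbf{v}', x) = \tilde{g}(\mathbf{v}', x) + \tilde{h}(x)$ over $x \in \mathcal{X}_{\mathbf{v}'_\text{end}}$, and $x_i$ belongs to that set, we get $\tilde{f}(\mathbf{v}') \leq \tilde{g}(\mathbf{v}', x_i) + \tilde{h}(x_i)$. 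Substituting the lemma's equality $\tilde{g}(\mathbf{v}', x_i) = g^*(x_i)$ and then applying pointwise admissibility (\cref{def:pointwise_admissible}), $\tilde{h}(x_i) \leq h^*(x_i)$, yields $\tilde{f}(\mathbf{v}') \leq g^*(x_i) + h^*(x_i)$.

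It then remains to show $g^*(x_i) + h^*(x_i) \leq f^*(s)$. The key observation is that $x_i$ lies on a genuinely cost-optimal trajectory from $s$ to $t$: because $\mathbf{v}$ is an optimal path terminating at $t$, and because $\tilde{h}$ must vanish at the target (admissibility forces $0 \leq \tilde{h} \leq h^* = 0$ on $\mathcal{X}_t$, where the single-vertex path $[t]$ reaches the target at zero cost), the heuristic-aware optimum $\mathbf{x}^*(\mathbf{v})$ coincides with a true cost-optimal trajectory. Hence $x_i$ is a point on the optimal path through $\mathbf{v}_i$, so the identity stated in the text gives $g^*(x_i) + h^*(x_i) = f^*(\mathbf{v}_i)$. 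Finally, since the optimal path $\mathbf{v}$ is itself a feasible solution passing through $\mathbf{v}_i$ with total cost $f^*(s)$, we have $f^*(\mathbf{v}_i) \leq f^*(s)$. Chaining these relations gives $\tilde{f}(\mathbf{v}') \leq f^*(s)$, completing the proof.

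I expect the main obstacle to be the third paragraph, specifically justifying that $\mathbf{x}^*(\mathbf{v})_i$ lies on a true cost-optimal trajectory even though $\mathbf{x}^*(\mathbf{v})$ is defined as an optimum with respect to the heuristic $\tilde{h}$ rather than the true cost-to-go $h^*$. The resolution hinges on $\tilde{h}$ being forced to zero at the target, so that for a path terminating at $t$ the heuristic-aware objective reduces to the true cost; this must be stated carefully so that the identity $g^*(x_i) + h^*(x_i) = f^*(\mathbf{v}_i)$ is legitimately applicable. The remaining steps are routine substitutions combining the lemma, the minimizing definition of $\tilde{f}$, and admissibility.
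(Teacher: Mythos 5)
Your proof is correct and takes essentially the same route as the paper's: invoke \cref{lemma:Q_has_node_with_optimal_cost_to_come} to get a path $\mathbf{v}'$ in $Q$ with optimal cost-to-come to $x_i := \mathbf{x}^*(\mathbf{v})_i$, bound $\tilde{f}(\mathbf{v}') \leq \tilde{f}(\mathbf{v}', x_i)$ since $\tilde{f}(\mathbf{v}')$ minimizes over the terminal point, then chain the lemma's equality with pointwise admissibility to conclude $\tilde{f}(\mathbf{v}') \leq f^*(s)$. Your third paragraph only makes explicit what the paper asserts without justification (``since $\mathbf{x}^*(\mathbf{v})_i$ is on an optimal trajectory''), namely that admissibility forces $\tilde{h}=0$ on $\mathcal{X}_t$, so the heuristic-aware optimum $\mathbf{x}^*(\mathbf{v})$ through the optimal path is a genuine cost-optimal trajectory --- a worthwhile clarification, and the same fact the paper itself invokes in the proof of \cref{thm:gcs_star_optimality}.
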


\begin{proof}
As \cref{alg:gcs_star} terminates only when $t$ is expanded, $t$ is unexpanded under these assumptions. By lemma \ref{lemma:Q_has_node_with_optimal_cost_to_come}, there is an unexpanded search path $\mathbf{v}'$  in $Q$ with $\mathbf{v}'_\text{end} = \mathbf{v}_i$ for some $i$ and an optimal cost-to-come to $\mathbf{x}^*(\mathbf{v})_{i}$. 

We know $\tilde{f}(\mathbf{v}') \leq  \tilde{f}(\mathbf{v}', \mathbf{x}^*(\mathbf{v})_{i})$ because,
more generally speaking, a path constrained to go through $\mathbf{v}'$ is estimated to be at least as cheap as a path additionally constrained to go through any particular point.
Then, 
\begin{align*}
         \tilde{f}(\mathbf{v}', \mathbf{x}^*(\mathbf{v})_{i}) &= \tilde{g}(\mathbf{v}', \mathbf{x}^*(\mathbf{v})_{i}) + \tilde{h}(\mathbf{x}^*(\mathbf{v})_{i}) \quad\textnormal{(By definition of $\tilde{f}$)} \\
        &= g^*(\mathbf{x}^*(\mathbf{v})_{i}) + \tilde{h}(\mathbf{x}^*(\mathbf{v})_{i}) \quad \textnormal{(By lemma \ref{lemma:Q_has_node_with_optimal_cost_to_come})} \\
        &\leq g^*(\mathbf{x}^*(\mathbf{v})_{i}) + {h}^*(\mathbf{x}^*(\mathbf{v})_{i}) \quad \textnormal{(By $\tilde{h}$ pointwise admissibility)} \\
        &=f^*(s) \quad\textnormal{(Since $\mathbf{x}^*(\mathbf{v})_{i}$ is on an optimal trajectory).}
    \end{align*}
Thus, we have $\tilde{f}(\mathbf{v}')\leq f^*(s).$ 
\end{proof}

\begin{theorem}\label{thm:gcs_star_optimality}
GCS* is \textbf{cost optimal}: GCS* returns an optimal trajectory from $s$ to $t$ in finite iterations if one exists.
\end{theorem}

\begin{proof}
By \cref{thm:gcs_star_completeness}, GCS* returns a feasible path $\mathbf{v}'$ from $s$ to $t$ in finite iterations.  Suppose for contradiction that the cost-to-come to $t$ through $\mathbf{v}'$ is suboptimal, $\tilde{g}(\mathbf{v}', \mathbf{x}^*(\mathbf{v}')_\text{end}) > f^*(s)$.  By admissibility, $h^*(x) = \tilde{h}(x) = 0\ \forall x\in \mathcal{X}_t$.  Thus, $\tilde{f}(\mathbf{v}') = \tilde{g}(\mathbf{v}', \mathbf{x}^*(\mathbf{v}')_\text{end}) > f^*(s)$.  By corollary \ref{corollary:unexpanded_t_Q_has_optimal_node}, just before termination, 
there existed a path $\mathbf{v}$ in $Q$ with $\tilde{f}(\mathbf{v})\leq f^*(s) < \tilde{f}(\mathbf{v}')$.  However, \cref{alg:gcs_star} always pops the path in $Q$ with the lowest $\tilde{f}$ value, and returns a path only after popping it.  This contradicts the assumption that $\mathbf{v}'$ was returned.  

\end{proof}

\end{document}